\definecolor{indigo}{RGB}{63, 81, 181}
\definecolor{red}{RGB}{210, 40, 95} 
\definecolor{pink}{RGB}{236, 64, 122}
\definecolor{green}{RGB}{46, 182, 125}
\definecolor{blue}{RGB}{66, 133, 244}
\definecolor{yellow}{RGB}{236, 178, 46}
\definecolor{anthracite}{RGB}{13, 13, 21}
\definecolor{gold}{RGB}{182, 131, 76}
\definecolor{color1}{RGB}{180, 222, 44}
\definecolor{color2}{RGB}{53, 183, 121}
\definecolor{color3}{RGB}{62, 74, 137}
\newcommand{\tbi}[1]{\textbf{\textit{(#1)}}}
\newcommand{\pred}{\bm{f}}
\newcommand{\fa}{\bm{h}}  
\newcommand{\fb}{\bm{g}}
\newcommand{\cam}{\bm{\varphi}}
\newcommand{\sX}{\mathcal{X}}
\newcommand{\sY}{\mathcal{Y}}
\newcommand{\sL}{\mathcal{H}} %
\newcommand{\x}{\bm{x}}
\newcommand{\y}{\bm{y}}
\newcommand{\e}{\mathbf{e}}
\newcommand{\cav}{\bm{v}}
\newcommand{\vu}{\bm{u}}
\newcommand{\activ}{\bm{a}}
\newcommand{\bias}{\mathbf{b}}
\newcommand{\ACE}{\textbf{\textcolor{color1}{ACE}}}
\newcommand{\ICE}{\textbf{\textcolor{color2}{ICE}}}
\newcommand{\CRAFT}{\textbf{\textcolor{color3}{CRAFT}}}
\newcommand{\X}{\mathbf{X}}
\newcommand{\U}{\mathbf{U}}
\newcommand{\V}{\mathbf{V}}
\newcommand{\W}{\mathbf{W}}
\newcommand{\A}{\mathbf{A}}
\newcommand{\tr}{\mathsf{T}}
\newcommand\dif{\mathop{}\!\mathrm{d}}
\newcommand{\Reals}{\mathbb{R}}
\DeclareRobustCommand\full  {\tikz[baseline=-0.6ex]\draw[thick] (0,0)--(0.5,0);}
\DeclareRobustCommand\dotted{\tikz[baseline=-0.6ex]\draw[thick,dotted] (0,0)--(0.54,0);}
\DeclareRobustCommand\dashed{\tikz[baseline=-0.6ex]\draw[thick,dashed] (0,0)--(0.54,0);}
\DeclareMathOperator*{\argmax}{arg\,max}
\DeclareMathOperator*{\argmin}{arg\,min}
\theoremstyle{plain}
\newtheorem{theorem}{Theorem}[section]
\newtheorem{definition}[theorem]{Definition}
\newtheorem{corollary}[theorem]{Corollary}
\theoremstyle{definition}
\theoremstyle{remark}
\newcommand{\etal}{\textit{et al.}}
\title{A Holistic Approach to Unifying Automatic Concept Extraction and Concept Importance Estimation}
\newcommand*{\inlineimg}[1]{%
    \raisebox{-.3\baselineskip}{%
        \includegraphics[
        height=\baselineskip,
        width=\baselineskip,
        keepaspectratio,
        ]{#1}%
    }%
}
\newcommand{\Lens}{\inlineimg{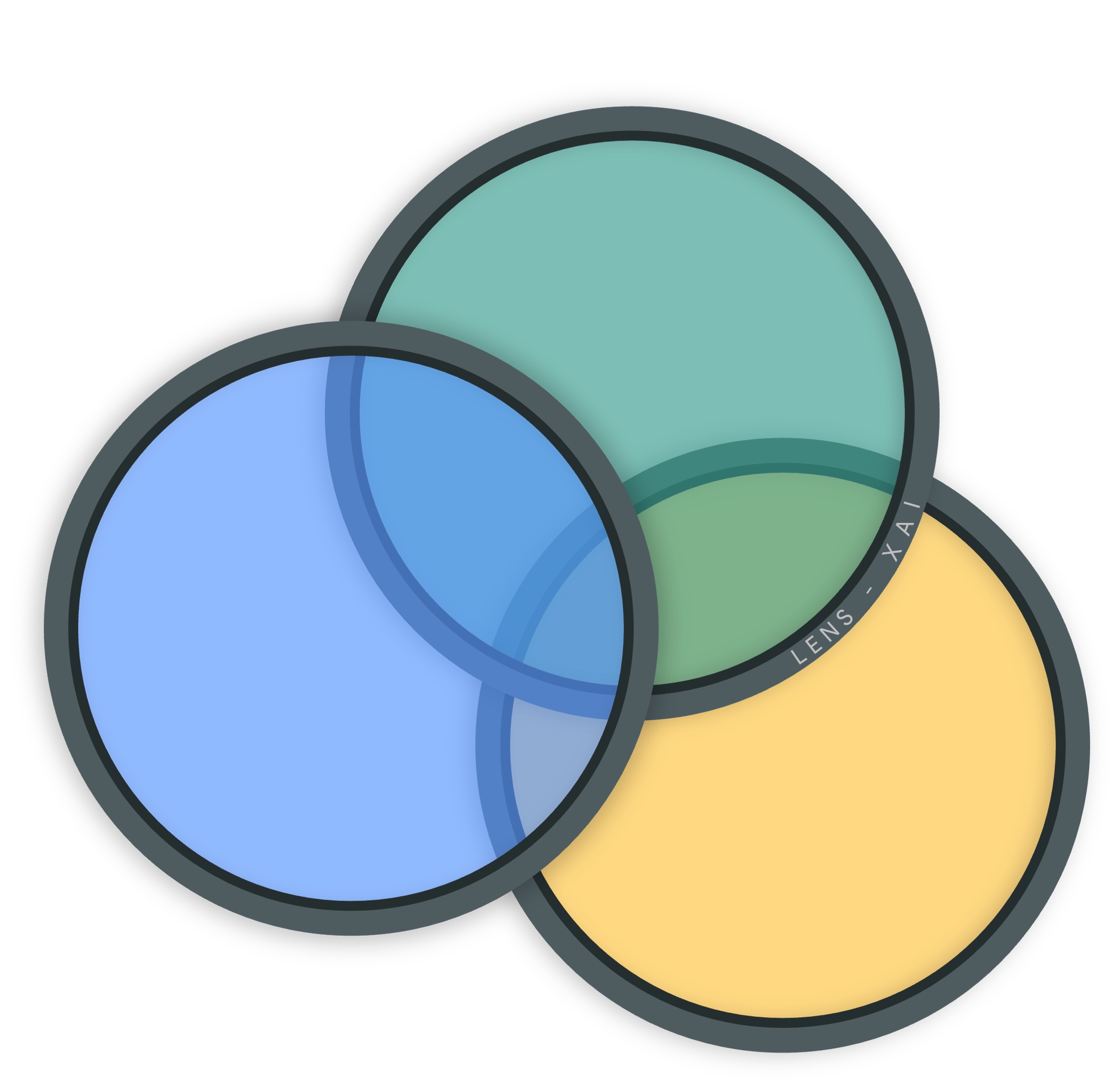} \href{https://serre-lab.github.io/Lens}{Lens}}
\author{%
  \textbf{Thomas Fel}$^{\star,1,2}$,  
  \textbf{Victor Boutin}$^{\star1,2}$, \\
  \textbf{Mazda Moayeri}$^{3}$,
  \textbf{Rémi Cadène}$^{1}$,
  \textbf{Louis Bethune}$^{2}$
  \textbf{L\'eo And\'eol}$^{2}$,
  \textbf{Mathieu Chalvidal}$^{1,2}$, \\
  \textbf{Thomas Serre}$^{1,2}$ \\
   $^1$Carney Institute for Brain Science, Brown University \\
   $^2$Artificial and Natural Intelligence Toulouse Institute \\
   $^3$Department of Computer Science, University of Maryland \\
   \texttt{\{thomas\_fel,victor\_boutin\}@brown.edu}
}
\begin{document}

\maketitle

\footnotetext{\hspace{-0.2cm}$\star$ \ The authors contributed equally.}

\begin{abstract}

In recent years, concept-based approaches have emerged as some of the most promising explainability methods to help us interpret the decisions of Artificial Neural Networks (ANNs). These methods seek to discover intelligible visual ``concepts'' buried within the complex patterns of ANN activations in two key steps: (1) concept extraction followed by (2) importance estimation. While these two steps are shared across methods, they all differ in their specific implementations.
Here, we introduce a unifying theoretical framework that recast the first step -- concept extraction problem -- as a special case of \textbf{dictionary learning}, and we formalize the second step -- concept importance estimation -- as a more general form of \textbf{attribution method}.
This framework offers several advantages as it allows us: \tbi{i} to propose new evaluation metrics for comparing different concept extraction approaches; \tbi{ii} to leverage modern attribution methods and evaluation metrics to extend and systematically evaluate state-of-the-art concept-based approaches and importance estimation techniques; \tbi{iii}  to derive theoretical guarantees regarding the optimality of such methods.

We further leverage our framework to try to tackle a crucial question in explainability: how to \textit{efficiently} identify clusters of data points that are classified based on a similar shared strategy.
To illustrate these findings and to highlight the main strategies of a model, we introduce a visual representation called the strategic cluster graph.
Finally, we present \Lens, a dedicated website that offers a complete compilation of these visualizations for all classes of the ImageNet dataset.

\end{abstract}

\vspace{-3mm}
\section{Introduction}
\vspace{-1mm}
The black-box nature of Artificial Neural Networks (ANNs) poses a significant hurdle to their deployment in industries that must comply with stringent ethical and regulatory standards~\cite{tripicchio2020deep}. In response to this challenge, eXplainable Artificial Intelligence (XAI) focuses on developing new tools to help humans better understand how ANNs arrive at their decisions~\cite{doshivelez2017rigorous, jacovi2021formalizing}. Among the large array of methods available, attribution methods have become the go-to approach~\cite{simonyan2013deep, smilkov2017smoothgrad, shrikumar2017learning, sundararajan2017axiomatic, Selvaraju_2019, fel2021sobol, novello2022making, eva, graziani2021sharpening, zeiler2014visualizing, Fong_2017}. They yield heatmaps in order to highlight the importance of each input feature (or group of features ~\cite{idrissi2023coalitional}) for driving a model’s decision. However, there is growing consensus that these attribution methods fall short of providing meaningful explanations~\cite{adebayo2018sanity,sixt2020explanations,slack2021reliable,rao2022towards} as revealed by multiple user studies~\cite{hase2020evaluating, shen2020useful, fel2021cannot, kim2021hive, nguyen2021effectiveness, sixt2022users, hase2020evaluating}. It has been suggested that for explainability methods to become usable by human users, they need to be able to highlight not just the location of important features within an image (i.e., the {\it where} information) but also their semantic content (i.e., the {\it what} information). 

One promising set of explainability methods to address this challenge includes concept-based explainability methods, which are methods that aim to identify high-level concepts within the activation space of ANNs~\cite{kim2018interpretability}. These methods have recently gained renewed interest due to their success in providing human-interpretable explanations~\cite{ghorbani2019towards, zhang2021invertible, fel2023craft, graziani2023concept}~(see section~\ref{sec:related_concept} for a detailed description of the related work). However, concept-based explainability methods are still in the early stages, and progress relies largely on researchers' intuitions rather than well-established theoretical foundations.  A key challenge lies in formalizing the notion of concept itself~\cite{genone2012concept}. 
Researchers have proposed desiderata such as meaningfulness, coherence, and importance~\cite{ghorbani2019towards} but the lack of formalism in concept definition has hindered the derivation 
of appropriate metrics for comparing different methods.

This article presents a theoretical framework to unify and characterize current concept-based explainability methods. Our approach builds on the fundamental observation that all concept-based explainability methods share two key steps: (1) concepts are extracted, and (2) importance scores are assigned to these concepts based on their contribution to the model's decision~\cite{ghorbani2019towards}. Here, we show how the first extraction step can be formulated as a dictionary learning problem while the second importance scoring step can be formulated as an attribution problem in the concept space. To summarize, our contributions are as follows:\vspace{-1mm}
\setlist[itemize]{leftmargin=5.5mm}
\begin{itemize}
    \item We describe a novel framework that unifies all modern concept-based explainability methods and we borrow metrics from different fields (such as sparsity, reconstruction, stability, FID, or OOD scores) to evaluate the effectiveness of those methods. \vspace{-1mm}
    \item We leverage modern attribution methods to derive seven novel concept importance estimation methods and provide theoretical guarantees regarding their optimality. 
    Additionally, we show how standard faithfulness evaluation metrics used to evaluate attribution methods (i.e., Insertion, Deletion~\cite{petsiuk2018rise}, and $\mu$Fidelity~\cite{aggregating2020}) can be adapted to serve as benchmarks for concept importance scoring.
    In particular, we demonstrate that Integrated Gradients, Gradient Input, RISE, and Occlusion achieve the highest theoretical scores for 3 faithfulness metrics when the concept decomposition is on the penultimate layer. \vspace{-1mm}
    \item We introduce the notion of local concept importance to address a significant challenge in explainability: the identification of image clusters that reflect a shared strategy by the model (see Figure~\ref{fig:clustering_graph}). We show how the corresponding cluster plots can be used as visualization tools to help with the identification of the main visual strategies used by a model to help explain false positive classifications. \vspace{-3mm}
\end{itemize}

\begin{figure}[h!]
\begin{center}
   \includegraphics[width=.99\textwidth]{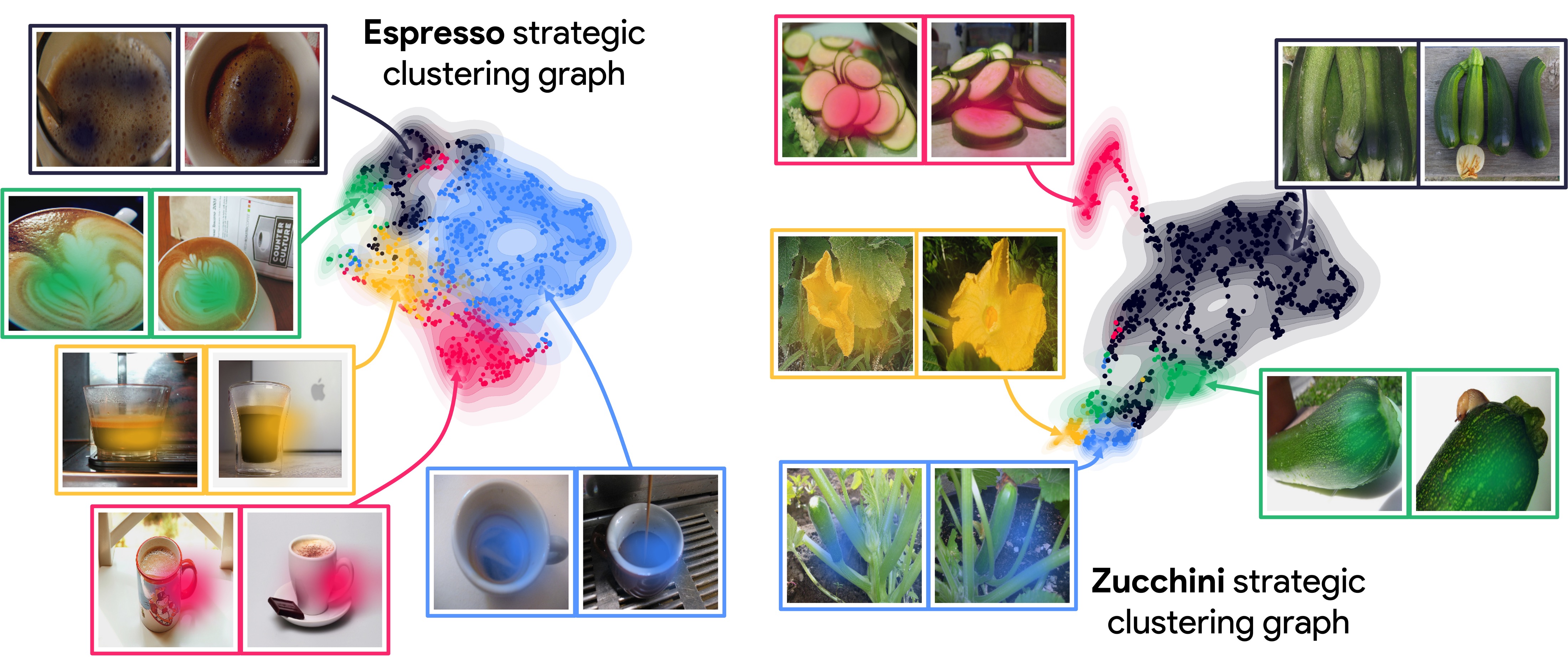}
\end{center}
   \caption{\textbf{Strategic cluster graphs for the espresso and zucchini classes.}
    The framework presented in this study provides a comprehensive approach to uncover local importance using any attribution methods. 
    Consequently, it allow us to estimate the critical concepts influencing the model's decision for each image.
    As a results, we introduced the Strategic cluster graph, which offers a visual representation of the main strategies employed by the model in recognizing an entire object class.
    For espresso (left), the main strategies for classification appear to be: \textcolor{anthracite}{$\bullet$} bubbles and foam on the coffee, \textcolor{green}{$\bullet$} Latte art, \textcolor{yellow}{$\bullet$} transparent cups with foam and black liquid, \textcolor{red}{$\bullet$} the handle of the coffee cup, and finally \textcolor{blue}{$\bullet$} the coffee in the cup, which appears to be the predominant strategy.
    As for zucchini, the strategies are: \textcolor{blue}{$\bullet$} a zucchini in a vegetable garden, \textcolor{yellow}{$\bullet$} the corolla of the zucchini flower, \textcolor{red}{$\bullet$} sliced zucchini, \textcolor{green}{$\bullet$} the spotted pattern on the zucchini skin and \textcolor{anthracite}{$\bullet$} stacked zucchini.
   }
\label{fig:clustering_graph}
\end{figure}

\clearpage
\section{Related Work}
~\label{sec:related_concept}

Kim~\etal~\cite{kim2018interpretability} were the first to propose a concept-based approach to interpret neural network internal states. They defined the notion of concepts using Concept Activation Vectors (CAVs). CAVs are derived by training a linear classifier between a concept's examples and random counterexamples and then taking the vector orthogonal to the decision boundary. In their work, the concepts are manually selected by humans. They further introduce the first concept importance scoring method, called Testing with CAVs (TCAV). TCAV uses directional derivatives to evaluate the contribution of each CAV to the model's prediction for each object category. Although this approach demonstrates meaningful explanations to human users, it requires a significant human effort to create a relevant image database of concepts. To address this limitation, Ghorbani \etal~\cite{ghorbani2019towards} developed an unsupervised method called Automatic Concept Extraction (\ACE) that extracts CAVs without the need for human supervision. In their work, the CAVs are the centroid of the activations (in a given layer) when the network is fed with multi-scale image segments belonging to an image class of interest. However, the use of image segments could introduce biases in the explanations~\cite{haug2021baselines, hsieh2020evaluations, kindermans2019reliability, sturmfels2020visualizing}. \ACE~also leverages TCAV to rank the concepts of a given object category based on their importance.

Zhang \etal~\cite{zhang2021invertible} proposed a novel method for concept-based explainability called Invertible Concept-based Explanation (\ICE). \ICE~leverages matrix factorization techniques, such as non-negative matrix factorization (NMF), to extract Concept Activation Vectors (CAVs). Here, the concepts are localized as the matrix factorization is applied on feature maps (before the global average pooling). In \ICE, the concepts' importance is computed using the TCAV score~\cite{kim2018interpretability}. Note that the Singular Value Decomposition (SVD) was also suggested as a concept discovery method~\cite{zhang2021invertible, graziani2023concept}. \CRAFT~(Concept Recursive Activation FacTorization for explainability) uses NMF to extract the concepts, but as it is applied after the global pooling average, the concepts are location invariant.  Additionally, \CRAFT~employs Sobol indices to quantify the global importance of concepts associated with an object category.
Recently, \cite{vielhaben2023multi} has proposed a novel and interesting approach that involves discovering entire subspaces as concepts, departing from the typical 1-dimensional approach.

\section{A Unifying perspective}

\paragraph{Notations.} Throughout, $||\cdot||_2$ and $||\cdot||_F$ represent the $\ell_2$ and Frobenius norm, respectively. 
We consider a general supervised learning setting, where a classifier $\pred : \sX \to \sY$ maps inputs from an input space $\mathcal{X} \subseteq \mathbb{R}^d$ to an output space $\sY \subseteq \Reals^c$. 
For any matrix $\X \in \Reals^{n \times d}$, $\x_i$ denotes the $i^{th}$ row of $\X$, where $i \in \{1, \ldots, n \}$ and $\x_i \in \Reals^{d}$.
Without loss of generality, we assume that $\pred$ admits an intermediate space $\sL \subseteq \Reals^p$. In this setup, $\fa : \sX \to \sL$ maps inputs to the intermediate space, and $\fb : \sL \to \sY$ takes the intermediate space to the output. Consequently, $\pred(\x) = (\fb \circ \fa)(\x)$. Additionally, let $\bm{a} = \fa(\x) \in \sL$ represent the activations of $\x$ in this intermediate space.
We also abuse notation slightly: $\pred(\X) = (\fb \circ \fa)(\X)$ denotes the vectorized application of $\pred$ on each element $\x$ of $\X$, resulting in $(\pred(\x_1),\ldots, \pred(\x_n))$.

Prior methods for concept extraction, namely \ACE~\cite{ghorbani2019towards}, \ICE~\cite{zhang2021invertible}~and \CRAFT~\cite{fel2023craft}, can be distilled into two fundamental steps:
\vspace{-0mm}
\begin{enumerate}[label=(\textit{\textbf{\roman*}}), labelindent=0pt,leftmargin=5mm]
\vspace{-2mm}
\item {\bf Concept extraction:} A set of images $\X \in \mathbb{R}^{n\times d}$ belonging to the same class is sent to the intermediate space giving activations $\A = \fa(\X) \in \mathbb{R}^{n \times p}$.
These activations are used to extract a set of $k$ CAVs using K-Means~\cite{ghorbani2019towards}, PCA (or SVD)~\cite{zhang2021invertible, graziani2023concept} or NMF~\cite{zhang2021invertible, fel2023craft}. Each CAV is denoted $\cav_i$ and $\V = (\cav_1, \ldots, \cav_k) \in \mathbb{R}^{p \times k}$ forms the dictionary of concepts.

\item {\bf Concept importance scoring:} 
It involves calculating a set of $k$ global scores, which provides an importance measure of each concept $\cav_i$ to the class as a whole. Specifically, it quantifies the influence of each concept $\cav_i$ on the final classifier prediction for the given set of points  $\X$. Prominent measures for concept importance include TCAV~\cite{kim2018interpretability} and the Sobol indices~\cite{fel2023craft}. 

\end{enumerate}
\vspace{-2mm}

The two-step process described above is repeated for all classes. In the following subsections, we theoretically demonstrate that the concept extraction step \tbi{i} could be recast as a dictionary learning problem (see~\ref{sec:dico_learning}). It allows us to reformulate and generalize the concept importance step \tbi{ii} using attribution methods (see~\ref{sec:importance}). 
\vspace{-2mm}
\subsection{Concept Extraction}
\vspace{-2mm}
\paragraph{A dictionary learning perspective.}
\label{sec:dico_learning} 
The purpose of this section is to redefine all current concept extraction methods as a problem within the framework of dictionary learning. Given the necessity for clearer formalization and metrics in the field of concept extraction, integrating concept extraction with dictionary learning enables us to employ a comprehensive set of metrics and obtain valuable theoretical insights from a well-established and extensively researched domain. 

The goal of concept extraction is to find a small set of interpretable CAVs (i.e., $\V$) that allows us to faithfully interpret the activation $\A$.  By preserving a linear relationship between $\V$ and $\A$, we facilitate the understanding and interpretability of the learned concepts~\cite{kim2018interpretability, elhage2022toy}. Therefore, we look for a coefficient matrix $\U \in \mathbb{R}^{n \times k}$ (also called loading matrix) and a set of CAVs $\V$, so that $\A \approx \U \V^\tr$.
In this approximation of $\A$ using the two low-rank matrices $(\U, \V)$,
$\V$ represents the concept basis used to reinterpret our samples, and $\U$ are the coordinates of the activation in this new basis. Interestingly, such a formulation allows a recast of the concept extraction problem as an instance of dictionary learning problem~\cite{mairal2014sparse} %
in which all known concept-based explainability methods fall:%

\begin{numcases}{(\U^\star, \V^\star) = \argmin_{\U,\V} || \A - \U \V^\tr ||^2_F ~~s.t~~}
 \forall ~ i, \vu_i \in \{ \e_1, \ldots, \e_k \} ~~ \text{(K-Means : \ACE~\cite{ghorbani2019towards})}, \label{eq:dico_kmeans}\\
  \V^\tr \V = \mathbf{I} ~~~ \text{(PCA:~\cite{zhang2021invertible, graziani2023concept})}, \label{eq:dico_pca}\\
 \U \geq 0, \V \geq 0 ~~~ \text{(NMF : \CRAFT~\cite{fel2023craft}~\& \ICE~\cite{zhang2021invertible}}) \\
 \U = \bm{\psi}(\A), ||\U||_0 \leq K  ~~\text{(Sparse Autoencoder \cite{makhzani2013k})} \label{eq:dico_nmf}
\end{numcases}

with $\bm{e}_i$ the $i$-th element of the canonical basis, $\mathbf{I}$ the identity matrix and $\bm{\psi}$ any neural network. 
In this context, $\V$ is the \emph{dictionary} and $\U$ the \emph{representation} of $\A$ with the atoms of $\V$. $\vu_i$ denote the $i$-th row of $\U$. 
These methods extract the concept banks $\V$ differently, thereby necessitating different interpretations\footnote{Concept extractions are typically overcomplete dictionaries, meaning that if the dictionary for each class is combined, $k > p$, as noted in \cite{fel2023craft}. Recently, \cite{fel2023craft} and a more detailed work \cite{bricken2023monosemanticity} suggest that overcomplete dictionaries are serious candidates to the superposition problem~\cite{elhage2022superposition}.}. 

In \ACE, the CAVs are defined as the centroids of the clusters found by the K-means algorithm.
Specifically, a concept vector $\cav_i$ in the matrix $\V$ indicates a dense concentration of points associated with the corresponding concept, implying a repeated activation pattern. 
The main benefit of ACE comes from its reconstruction process, involving projecting activations onto the nearest centroid, which ensures that the representation will lie within the observed distribution (no out-of-distribution instances). %
However, its limitation lies in its lack of expressivity, as each activation representation is restricted to a single concept ($||\vu||_{0}=1$). As a result, it cannot capture compositions of concepts, leading to sub-optimal representations that fail to fully grasp the richness of the underlying data distribution.

On the other hand, the PCA benefits from superior reconstruction performance due to its lower constraints, as stated by the Eckart-Young-Mirsky\cite{eckart1936approximation} theorem. %
The CAVs are the eigenvector of the covariance matrix: they indicate the direction in which the data variance is maximal. %
An inherent limitation is that the PCA will not be able to properly capture stable concepts that do not contribute to the sample variability (e.g. the dog-head concept might not be considered important by the PCA to explain the dog class if it is present across all examples).
Neural networks are known to cluster together the points belonging to the same category in the last layer to achieve linear separability (\cite{neural-collapse, fel2023craft}). Thus, the orthogonality constraint in the PCA might not be suitable to correctly interpret the manifold of the deep layer induced by points from the same class (it is interesting to note that this limitation can be of interest when studying all classes at once).
Also, unlike K-means, which produces strictly positive clusters if all points are positive (e.g., the output of ReLU), PCA has no sign constraint and can undesirably reconstruct out-of-distribution (OOD) activations, including negative values after ReLU. %

In contrast to K-Means, which induces extremely sparse representations, and PCA, which generates dense representations, the NMF (used in \CRAFT~and \ICE) strikes a harmonious balance as it provides moderately sparse representation. This is due to NMF relaxing the constraints imposed by the K-means algorithm (adding an orthogonality constraint on $\V$ such that $\V \V^\tr = \mathbf{I}$ would yield an equivalent solution to K-means clustering~\cite{ding2005equivalence}). This sparsity facilitates the encoding of compositional representations that are particularly valuable when an image encompasses multiple concepts. Moreover, by allowing only additive linear combinations of components with non-negative coefficients, %
NMF inherently fosters a parts-based representation. This distinguishes NMF from PCA, which offers a holistic representation model. Interestingly, the NMF is known to yield representations that are interpretable by humans~\cite{zhang2021invertible, fel2023craft}.
Finally, the non-orthogonality of these concepts presents an advantage as it accommodates the phenomenon of superposition~\cite{elhage2022toy}, wherein neurons within a layer may contribute to multiple distinct concepts simultaneously.

To summarize, we have explored three approaches to concept extraction, each necessitating a unique interpretation of the resulting Concept Activation Vectors (CAVs). Among these methods, NMF (used in \CRAFT~ and \ICE) emerges as a promising middle ground between PCA and K-means. Leveraging its capacity to capture intricate patterns, along with its ability to facilitate compositional representations and intuitive parts-based interpretations (as demonstrated in Figure~\ref{fig:qualitative_comparison}), NMF stands out as a compelling choice for extracting meaningful concepts from high-dimensional data. These advantages have been underscored by previous human studies, as evidenced by works such as Zhang et al.\cite{zhang2021invertible} and Fel et al.\cite{fel2023craft}.

\begin{table*}[h!]
\centering
\scalebox{0.87}{\begin{tabular}{l c c c c c}
\toprule
& \multicolumn{1}{c}{Relative $\ell_2$ ($\downarrow$)} 
& \multicolumn{1}{c}{Sparsity ($\uparrow$)} 
& \multicolumn{1}{c}{Stability ($\downarrow$)} 
& \multicolumn{1}{c}{FID ($\downarrow$)} 
& \multicolumn{1}{c}{OOD ($\downarrow$)} \\
 
\cmidrule(lr){2-2}
\cmidrule(lr){3-3}
\cmidrule(lr){4-4}
\cmidrule(lr){5-5}
\cmidrule(lr){6-6}

& 
Eff / R50 / Mob &
Eff / R50 / Mob &
Eff / R50 / Mob &
Eff / R50 / Mob &
Eff / R50 / Mob 
\\

\midrule
PCA 
   & 0.60 / 0.54 / 0.73 
   & 0.00 / 0.00 / 0.0
   & 0.41 / 0.38 / 0.43
   & 0.47 / 0.17 / 0.24
   & 2.44 / 0.36 / 0.16
\\
KMeans 
   & 0.72 / 0.66 / 0.84 
   & 0.95 / 0.95 / 0.95
   & 0.07 / 0.08 / 0.04
   & 0.46 / 0.21 / 0.33
   & 1.76 / 0.29 / 0.15
\\
NMF 
   & 0.63 / 0.57 / 0.75 
   & 0.68 / 0.44 / 0.64
   & 0.17 / 0.14 / 0.16
   & 0.38 / 0.21 / 0.24
   & 1.98 / 0.29 / 0.15
\\
\bottomrule
\end{tabular}}
\caption{\textbf{Concept extraction comparison.} Eff, R50 and Mob denote EfficientNetV2\cite{zhang2018efficient}, ResNet50\cite{he2016deep}, MobileNetV2\cite{sandler2018mobilenetv2}. The concept extraction methods are applied on the last layer of the networks. Each results is averaged across 10 classes of ImageNet and obtained from a set of 16k images for each class.
}\label{tab:quantitative_comparison}
\vspace{-6mm}
\end{table*}

\begin{figure}[t]
\begin{center}
   \includegraphics[width=.99\textwidth]{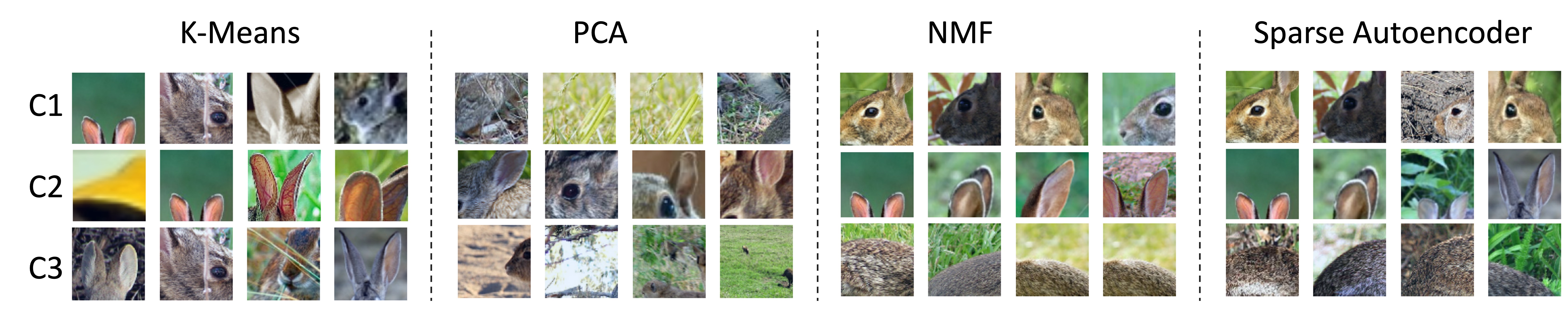}
\end{center}
   \caption{\textbf{Most important concepts extracted for the studied methods.} This qualitative example shows the three most important concepts extracted for the 'rabbit' class using a ResNet50 trained on ImageNet. The crops correspond to those maximizing each concepts $i$ (i.e., $\x$ where $\U(\x)_i$ is maximal). As demonstrated in previous works \cite{zhang2021invertible,fel2023craft,parekh2022listen}, NMF (requiring positive activations) produces particularly interpretable concepts despite poorer reconstruction than PCA and being less sparse than K-Means. Details for the sparse Autoencoder architecture are provided in the appendix.}
\label{fig:qualitative_comparison}
\end{figure}

\paragraph{Evaluation of concept extraction}
Following the theoretical discussion of the various concept extraction methods, we conduct an empirical investigation of the previously discussed properties to gain deeper insights into their distinctions and advantages. In our experiment, we apply the PCA, K-Means, and NMF concept extraction methods on the penultimate layer of three state-of-the-art models. We subsequently evaluate the concepts using five different metrics (see Table \ref{tab:quantitative_comparison}). 
All five metrics are connected with the desired characteristics of a dictionary learning method. They include achieving a high-quality reconstruction (Relative l2), sparse encoding of concepts (Sparsity), ensuring the stability of the concept base in relation to $\A$ (Stability), performing reconstructions within the intended domain (avoiding OOD), and maintaining the overall distribution during the reconstruction process (FID).
All the results come from 10 classes of ImageNet (the one used in Imagenette \cite{imagenette}), and are obtained using $n=16k$ images for each class.

We begin our empirical investigation by using a set of standard metrics derived from the dictionary learning literature, namely Relative $l_2$ and Sparsity. 
Concerning the Relative $\ell_2$, PCA achieves the highest score among the three considered methods, confirming the theoretical expectations based on the Eckart–Young–Mirsky theorem~\cite{eckart1936approximation}, followed by NMF.
Concerning the sparsity of the underlying representation $\vu$, we compute the proportion of non-zero elements $||\vu||_0 / k$. Since K-means inherently has a sparsity of $1 / k$ (as induced by equation \ref{eq:dico_kmeans}), it naturally performs better in terms of sparsity, followed by NMF.

We deepen our investigation by proposing three additional metrics that offer complementary insights into the extracted concepts. Those metrics are the Stability, the FID, and the OOD score.
The Stability (as it can be seen as a loose approximation of algorithmic stability~\cite{bousquet2002stability}) measures how consistent concepts remain when they are extracted from different subsets of the data.
To evaluate Stability, we perform the concept extraction methods $N$ times on $K$-fold subsets of the data. Then, we map the extracted concepts together using a Hungarian loss function and measure the cosine similarity of the CAVs. If a method is stable, it should yield the same concepts (up to permutation) across each $K$-fold, where each fold consists of $1000$ images.
K-Means and NMF demonstrate the highest stability, while PCA appears to be highly unstable, which can be problematic for interpreting the results and may undermine confidence in the extracted concepts.

The last two metrics, FID and OOD, are complementary in that they measure: (i) how faithful the representations extracted are w.r.t the original distribution, and (ii) the ability of the method to generate points lying in the data distribution (non-OOD).
Formally, the FID quantifies the 1-Wasserstein distance~\cite{villaniOpt} $\mathcal{W}_1$ between the empirical distribution of activation $\A$, denoted $\mu_{\bm{a}}$, and the empirical distribution of the reconstructed activation $\U\V^\tr$ denoted $\mu_{\bm{u}}$. Thus, FID is calculated as $\text{FID} = \mathcal{W}_1(\mu_{\bm{a}}, \mu_{\bm{u}})$.
On the other hand, the OOD score measures the plausibility of the reconstruction by leveraging Deep-KNN~\cite{sun2022out}, a recent state-of-the-art OOD metric. More specifically,  we use the Deep-KNN score to evaluate the deviation of a reconstructed point from the closest original point. In summary, a good reconstruction method is capable of accurately representing the original distribution (as indicated by FID) while ensuring that the generated points remain within the model's domain (non-OOD). 
K-means leads to the best OOD scores because each instance is reconstructed as a centroid, resulting in proximity to in-distribution (ID) instances. However, this approach collapses the distribution to a limited set of points, resulting in low FID. On the other hand, PCA may suffer from mapping to negative values, which can adversely affect the OOD score. Nevertheless, PCA is specifically optimized to achieve the best average reconstructions. NMF, with fewer stringent constraints, strikes a balance by providing in-distribution reconstructions at both the sample and population levels.

In conclusion, the results clearly demonstrate NMF as a method that strikes a balance between the two approaches as NMF demonstrates promising performance across all tested metrics. Henceforth, we will use the NMF to extract concepts without mentioning it.

\vspace{-3mm}
\paragraph{The Last Layer as a Promising Direction}
The various methods examined, namely \ACE, \ICE, and \CRAFT, generally rely on a deep layer to perform their decomposition without providing quantitative or theoretical justifications for their choice. 
To explore the validity of this choice, we apply the aforementioned metrics to each block's output in a ResNet50 model.
Figure~\ref{fig:metrics_across_layer} illustrates the metric evolution across different blocks, revealing a trend that favors the last layer for the decomposition. This empirical finding aligns with the practical implementations discussed above.

\begin{figure}[h!]
\begin{center}
   \includegraphics[width=.99\textwidth]{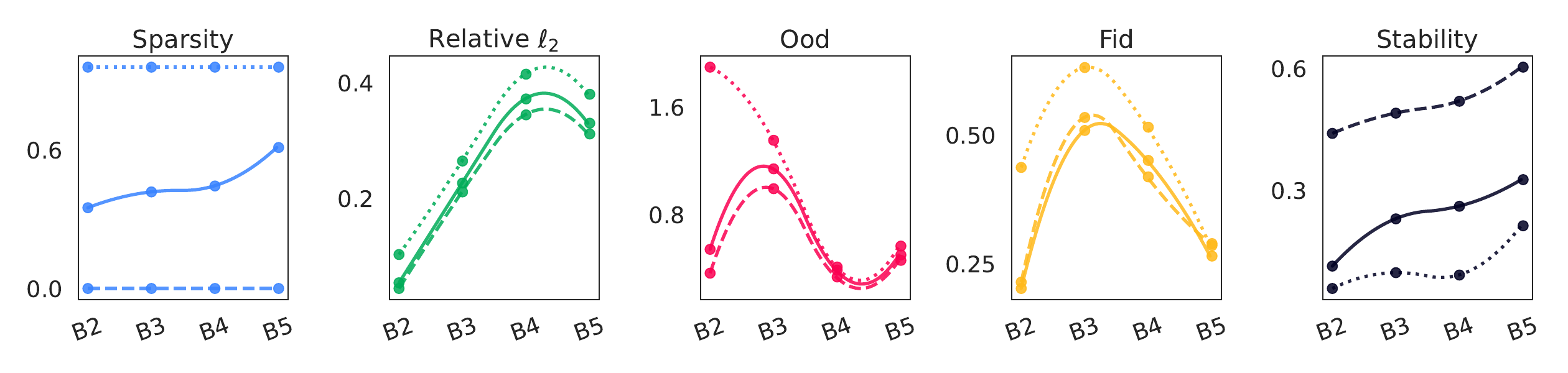}
\end{center}
   \caption{\textbf{Concept extraction metrics across layers.} The concept extraction methods are applied on activations probed on different blocks of a ResNet50 (B2 to B5). Each point is averaged over 10 classes of ImageNet using $16$k images for each class. We evaluate $3$ concept extraction methods: PCA (\dashed), NMF (\full), and KMeans (\dotted).
   }
\label{fig:metrics_across_layer}
\vspace{-7mm}
\end{figure}

\vspace{-0mm}
\subsection{Concept importance}\label{sec:importance}
\vspace{-2mm}

In this section, we leverage our framework to unify concept importance scoring using the existing attribution methods. Furthermore, we demonstrate that specifically in the case of decomposition in the penultimate layer, it exists optimal methods for importance estimation, namely RISE~\cite{petsiuk2018rise}, Integrated Gradients~\cite{sundararajan2017axiomatic}, Gradient-Input~\cite{shrikumar2017learning}, and Occlusion~\cite{zeiler2014visualizing}. We provide theoretical evidence to support the optimality of these methods.
\vspace{-4mm}
\paragraph{From concept importance to attribution methods}
The dictionary learning formulation allows us to define the concepts $\V$ in such a way that they are optimal to reconstruct the activation, i.e., $\A \approx \U \V^\tr$. Nevertheless, this does not guarantee that those concepts are important for the model's prediction. For example, the ``grass'' concept might be important to characterize the activations of a neural network when presented with a 
St-Bernard image, but it might not be crucial for the network to classify the same image as a St Bernard~\cite{kim2018interpretability, adebayo2018sanity, ghorbani2017interpretation}. The notion of concept importance is precisely introduced to avoid such a confirmation bias and to identify the concepts used to classify among all detected concepts.

We use the notion of Concept ATtribution methods (which we denote as \emph{CAT}s) to assess the concept importance score. The CATs are a generalization of the attribution methods: 
while attribution methods assess the sensitivity of the model output to a change in the pixel space, the concept importance evaluates the sensitivity to a change in the concept space. To compute the CATs methods, it is necessary to link the activation $\activ \in \mathbb{R}^p$ to the concept base $\V$ and the model prediction $\y$. To do so, we feed the second part of the network ($\fb$) with the activation reconstruction ($\vu \V^\tr \approx \activ$) so that $\y = \fb(\vu\V^\tr)$. Intuitively, a CAT method quantifies how a variation of  $\vu$ will impact $\y$. 
We denote $\cam_i(\bm{u})$ the $i$-th coordinate of $\cam(\bm{u})$, so that it represents the importance of the $i$-th concept in the representation $\bm{u}$.  Equipped with these notations, we can leverage the sensitivity metrics introduced in standard attribution methods to re-define the current measures of concept importance, as well as introduce the new CATs borrowed from the attribution methods literature:

\vspace{2mm}
\scalebox{0.9}{\parbox{\linewidth}{%
\begin{empheq}[left={\cam_{i}(\vu) =\empheqlbrace}]{alignat=1}
&\nabla_{\vu_i} \fb(\vu \V^\tr)
\qquad \qquad \qquad \qquad \qquad \qquad ~~~~~~
\text{(used in TCAV: \ACE, \ICE)}, \label{eq:attr_tkav}\\
&\displaystyle \frac{ \mathbb{E}_{\mathbf{m}_{\sim i}}( \mathbb{V}_{\mathbf{m}}( \fb( (\vu \odot \mathbf{m} ) \V^\tr ) | \mathbf{m}_{\sim i} ) ) }{ \mathbb{V}( \fb( (\vu \odot \mathbf{m} ) \V^\tr)) }
\qquad \qquad \qquad \qquad ~~ \text{(Sobol : \CRAFT),} \label{eq:attr_sobol}\\
&(\vu_i - \vu_i')  \times \int_0^1\nabla_{\vu_i}\fb((\vu' \alpha + (1 - \alpha)(\vu - \vu'))\V^\tr) d\alpha
\qquad \text{Int.Gradients}, \label{eq:attr_integrated_grads}\\
&\displaystyle \underset{\bm{\delta} \sim \mathcal{N}(0, \mathbf{I}\sigma)}{\mathbb{E}}(\nabla_{\vu_i} \fb( (\vu + \bm{\delta})\V^T) )
\qquad \qquad \qquad \qquad \qquad \qquad ~~ \text{Smooth grad}. \label{eq:attr_smooth_grad} \\
\ldots \nonumber
\end{empheq} 
}}
\vspace{3mm}

The complete derivation of the 7 new CATs is provided in the appendix. 
In the derivations, $\nabla_{\vu_i}$ denotes the gradient with respect to the $i$-th coordinate of $\vu$, while $\mathbb{E}$ and $\mathbb{V}$ represent the expectation and variance, respectively.
In Eq.~\ref{eq:attr_sobol}, $\mathbf{m}$ is a mask of real-valued random variable between $0$ and $1$ (i.e $\mathbf{m}\sim\mathcal{U}([0,1]^p)$). We note that, when we use the gradient (w.r.t to $\vu_i$) as an importance score, we end up with the directional derivative used in the TCAV metric~\cite{kim2018interpretability}, which is %
used by \ACE~and \ICE~to assess the importance of concepts. 
\CRAFT~leverages the Sobol-Hoeffding decomposition (used in sensitivity analysis), to estimate the concept importance. The Sobol indices measure the contribution of a concept as well as its interaction of any order with any other concepts to the output variance. Intuitively, the numerator of Eq.~\ref{eq:attr_sobol} is the expected variance that would be left if all variables but $\vu_i$ were to be fixed.

\begin{figure}[t]
\begin{subfigure}[b]{0.49\textwidth}
\includegraphics[width=.99\textwidth]{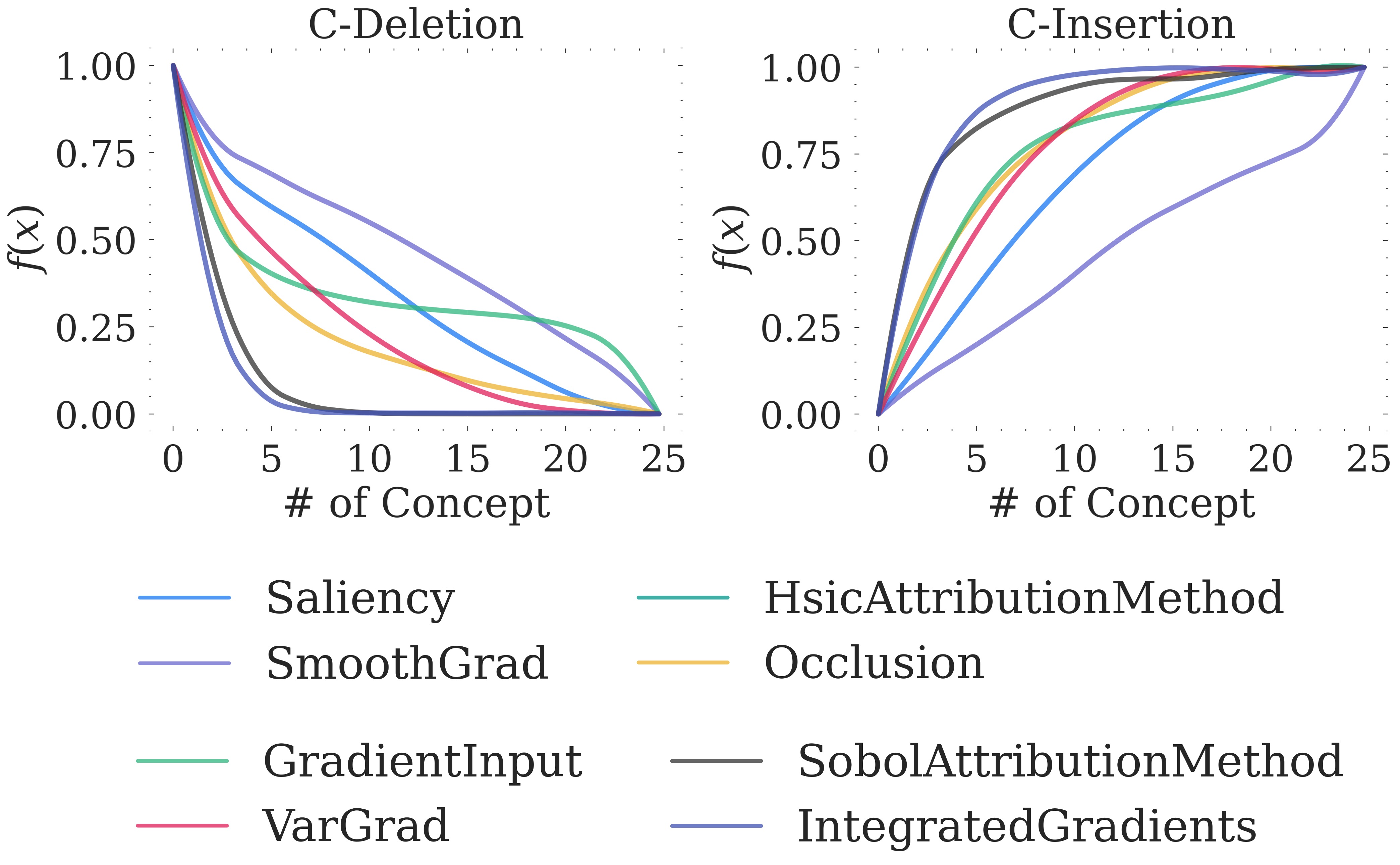}
    \caption{}
\end{subfigure}
\hfil
\begin{subfigure}[b]{0.49\textwidth}
\includegraphics[width=.99\textwidth]{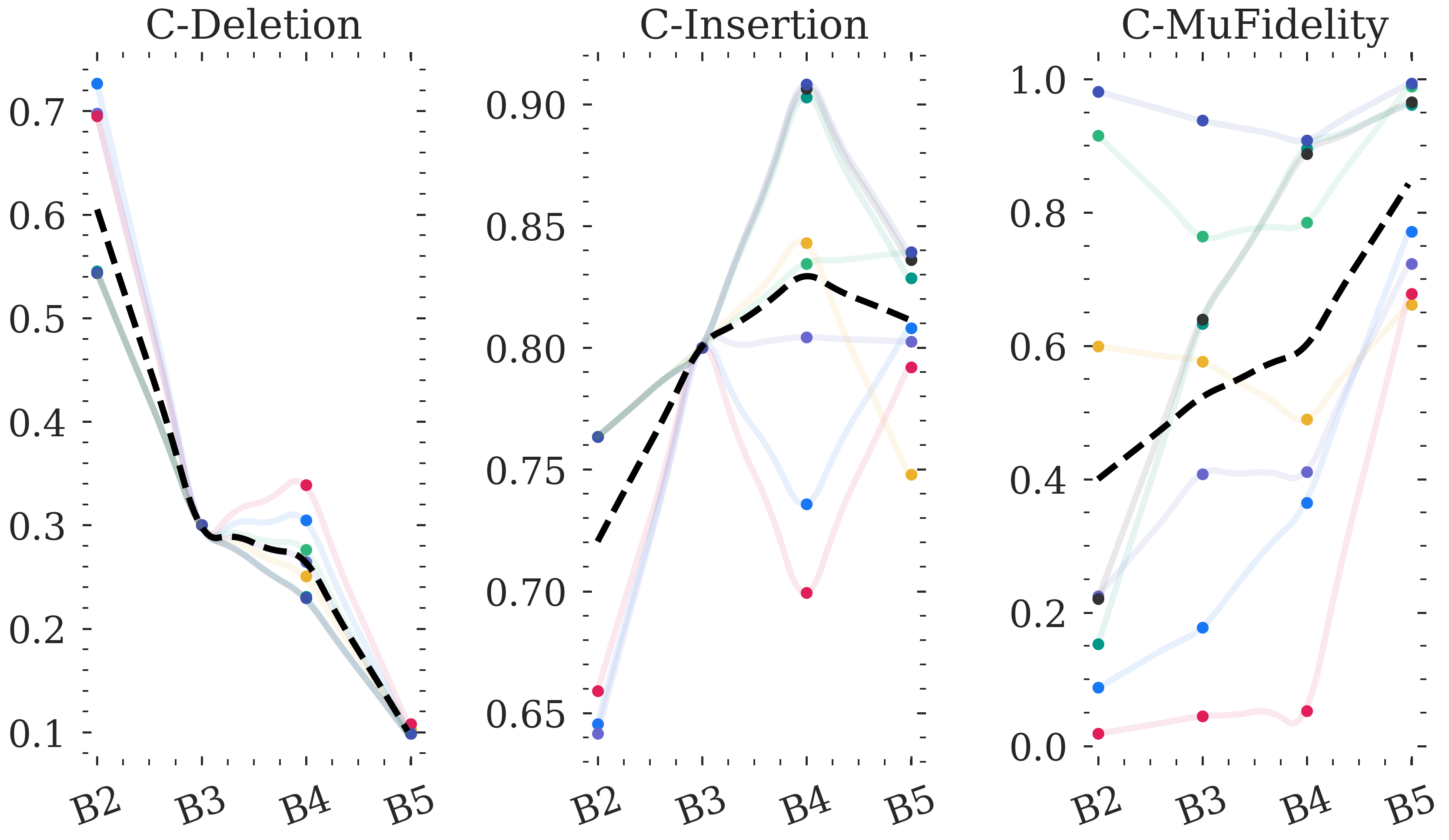}
    \caption{}
\end{subfigure}
\caption{\textbf{(a) C-Deletion, C-Insertion curves.} Fidelity curves for C-Deletion depict the model's score as the most important concepts are removed. The results are averaged across 10 classes of ImageNet using a ResNet50 model.
\textbf{(b) C-Deletion, C-Insertion and C-$\mu$Fidelity across layer.} 
We report the $3$ metrics to evaluate CATs for each block (from B2 to B5) of a ResNet50. 
We evaluate $8$ Concept Attribution methods, all represented with different colors (see legend in  Figure~\ref{fig:deletion_curves}(a). The average trend of these eight methods is represented by the black dashed line (\dashed). Lower C-Deletion is better, higher C-Insertion and C-$\mu$Fidelity is better. Overall, it appears that the estimation of importance becomes more faithful towards the end of the model.
}
\label{fig:deletion_curves}
\end{figure}

\paragraph{Evaluation of concept importance methods}

Our generalization of the concept importance score, using the Concept ATtributions (CATs), allows us to observe that current concept-based explainability methods are only leveraging a small subset of concept importance methods. In Appendix~\ref{sup:all_cams}, we provide the complete derivation of $7$ new CATs based on the following existing attribution methods, notably: Gradient input~\cite{shrikumar2017learning}, Smooth grad~\cite{smilkov2017smoothgrad}, Integrated Gradients~\cite{sundararajan2017axiomatic}, VarGrad~\cite{hooker2018benchmark}, Occlusion~\cite{zeiler2014visualizing}, HSIC~\cite{novello2022making} and RISE~\cite{petsiuk2018rise}.

With the concept importance scoring now formulated as a generalization of attribution methods, we can borrow the metrics from the attribution domain to evaluate the faithfulness~\cite{jacovi2020towards,petsiuk2018rise,aggregating2020} of concept importance methods. In particular, 
we adapt three distinct metrics %
to evaluate the significance of concept importance scores: the C-Deletion~\cite{petsiuk2018rise}, C-Insertion~\cite{petsiuk2018rise}, and C-$\mu$Fidelity~\cite{aggregating2020} metrics.
In C-Deletion, we gradually remove the concepts (as shown in Figure \ref{fig:deletion_curves}), in decreasing order of importance, and we report the network's output each time a concept is removed. When a concept is removed in C-Deletion, the corresponding coordinate in the representation is set to $\bm{0}$. 
The final C-Deletion metrics are computed as the area under the curve in Figure~\ref{fig:deletion_curves}. For C-Insertion, this is the opposite: we start from a representation vector filled with zero, and we progressively add more concepts, following an increasing order of importance.

For the C-$\mu$Fidelity, we calculate the correlation between the model's output when concepts are randomly removed and the importance assigned to those specific concepts.
The results across layers for a ResNet50 model are depicted in Figure \ref{fig:deletion_curves}b. We observe that decomposition towards the end of the model is preferred across all the metrics. As a result, in the next section, we will specifically examine the case of the penultimate layer.

\paragraph{A note on the last layer}
Based on our empirical results, it appears that the last layer is preferable for both improved concept extraction and more accurate estimation of importance. 
Herein, we derive theoretical guarantees about the optimality of concept importance methods in the penultimate layer. %
Without loss of generality, we assume $y \in \mathbb{R}$ the logits of the class of interest. In the penultimate layer, the score $y$ is a linear combination of activations: $y=\bm{a}\W+\bias$ for weight matrix $\W$ and bias $\bias$. 
In this particular case, all CATs have a closed-form (see appendix~\ref{sup:closed_form}), that allows us to derive $2$ theorems. The first theorem tackles the CATs optimality for the C-Deletion and C-Insertion methods (demonstration in Appendix~\ref{sup:matroid}). We observe that the C-Deletion and C-Insertion problems can be represented as weighted matroids. Therefore the greedy algorithms lead to optimal solutions for CATs and a similar theorem could be derived for C-$\mu$Fidelity.
\begin{theorem}[Optimal C-Deletion, C-Insertion in the penultimate layer]
When decomposing in the penultimate layer,~\textbf{Gradient Input}, \textbf{Integrated Gradients}, \textbf{Occlusion}, and \textbf{Rise} yield the optimal solution for the C-Deletion and C-Insertion metrics.
More generally, any method $\cam(\vu)$ that satisfies the condition 
$\forall (i, j) \in \{1, \ldots, k\}^2, 
(\vu \odot \e_i) \V^\tr\W \geq (\vu \odot \e_j) \V^\tr \W
\implies 
\cam(\vu)_i \geq \cam(\vu)_j 
$ yields the optimal solution.
\end{theorem}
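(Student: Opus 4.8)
The plan is to exploit the affine closed form of $\fb$ in the penultimate layer, $\fb(\vu\V^\tr)=\vu\V^\tr\W+\bias$, to collapse both the C-Deletion and C-Insertion objectives to a one-dimensional sorting problem whose optimum is pinned down by the rearrangement inequality (equivalently, by the greedy algorithm on the associated weighted matroid, as in the matroid reformulation referenced before the theorem). First I would introduce, for each concept $i$, its marginal contribution $c_i := (\vu\odot\e_i)\V^\tr\W = \vu_i(\V^\tr\W)_i$, so that the logit decomposes additively as $y=\bias+\sum_{i=1}^k c_i$. The decisive consequence of linearity is that zeroing out any subset $S$ of the coordinates of $\vu$ shifts the output by exactly $-\sum_{i\in S}c_i$, independently of the order in which coordinates were zeroed. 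Hence along a C-Deletion run following a permutation $\pi$ the output after $m$ removals equals $\bias+\sum_i c_i-\sum_{l=1}^m c_{\pi(l)}$, and along a C-Insertion run following $\sigma$ it equals $\bias+\sum_{l=1}^m c_{\sigma(l)}$.

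Next I would sum these values over the $k+1$ steps of each curve. A short computation gives $\mathrm{AUC}_{\mathrm{del}}(\pi)=(k+1)\bigl(\bias+\sum_i c_i\bigr)-\sum_{l=1}^{k}(k+1-l)\,c_{\pi(l)}$ and $\mathrm{AUC}_{\mathrm{ins}}(\sigma)=(k+1)\bias+\sum_{l=1}^{k}(k+1-l)\,c_{\sigma(l)}$: in both cases the ordering enters only through the term $\sum_{l=1}^k w_l\,c_{\pi(l)}$ with strictly decreasing positive weights $w_l=k+1-l$. By the rearrangement inequality this term is maximized exactly when $c_{\pi(1)}\ge c_{\pi(2)}\ge\cdots\ge c_{\pi(k)}$, so minimizing C-Deletion and maximizing C-Insertion both amount to ranking the concepts by decreasing $c_i$. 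This is precisely the weighted-matroid/greedy statement: the concept-to-position assignment maximizing $\sum_l w_l c_{\pi(l)}$ is a maximum-weight basis problem, and the greedy rule ``always take the remaining concept with the largest $c_i$'' is exact; the C-Insertion convention only switches a minimization to a maximization of the same ordered sum, so it does not affect the conclusion.

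Then I would close the gap between $c_i$ and a generic scoring $\cam$. Under the hypothesis of the theorem, $c_i\ge c_j\Rightarrow\cam(\vu)_i\ge\cam(\vu)_j$, so ranking concepts by decreasing $\cam(\vu)_i$ yields an ordering that is simultaneously a decreasing ordering of the $c_i$; ties are immaterial since $\sum_l w_l c_{\pi(l)}$ depends only on how the multiset of contributions is paired with the weights. By the previous paragraph this ordering is optimal for both metrics, which is the general claim. It then remains to verify the hypothesis for the four named methods from their closed forms (Appendix~\ref{sup:closed_form}): Gradient-Input gives $\cam(\vu)_i=\vu_i\nabla_{\vu_i}\fb(\vu\V^\tr)=\vu_i(\V^\tr\W)_i=c_i$; Integrated Gradients has a constant gradient along the interpolation path, so the integral collapses and $\cam(\vu)_i=(\vu_i-\vu_i')(\V^\tr\W)_i=c_i$ for the zero baseline; Occlusion resets $\vu_i$ to its baseline, changing the output by exactly $c_i$; and RISE with i.i.d.\ masks $\mathbf{m}$ yields a value that is an affine function of $c_i$ with common positive slope equal to the mask variance, hence order-preserving in $c_i$. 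In each case the hypothesis holds.

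The hard part will not be any individual computation but being careful about two structural points. First, the whole reduction rests on the contribution of each concept to the output being additive and independent of context — this is exactly where the penultimate-layer assumption (affine $\fb$) is indispensable, and it is what lets the greedy/rearrangement argument apply; in a nonlinear layer the ``contribution'' of removing concept $i$ would depend on which concepts have already been removed and the matroid structure would break. Second, one must keep the weight bookkeeping precise so that the AUC really is a linear functional of the ordering with strictly monotone coefficients (needed both for the rearrangement inequality to single out the sorted order and for the matroid phrasing), and then reconcile the exact C-Insertion ordering convention with the sign of those weights — which, as the formulas above show, only dualizes the optimization and leaves the optimal ordering unchanged.
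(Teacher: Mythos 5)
Your proof is correct, and it takes a genuinely different route from the paper. The paper proves the theorem via matroid theory: it recalls the definitions of (uniform, weighted) matroids and bases, proves that the greedy algorithm returns a maximum-weight basis, and then observes that each step $n$ of C-Insertion is a maximum-weight basis problem on the $n$-uniform matroid with weight $w(e_i) = \vu_i \V_i^\tr \W + b$, concluding that any $\cam$ compatible with this ordering is optimal (C-Deletion follows by negating $w$). You instead write the AUC of each curve explicitly as an affine functional of the permutation, $\sum_{l=1}^k (k+1-l)\,c_{\pi(l)}$ up to constants, and invoke the rearrangement inequality on the strictly decreasing weight sequence $(k+1-l)$. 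This is more elementary (no matroid machinery) and, I would argue, tighter in one respect: the C-Insertion and C-Deletion objectives are sums over all $k+1$ prefix sizes, and the paper's argument quietly relies on the nestedness of greedy solutions across ranks of the uniform matroid, whereas your AUC decomposition handles all prefix sizes simultaneously in a single expression. The paper's matroid phrasing, in exchange, hooks directly into a well-known optimality framework and perhaps clarifies why the linearity hypothesis is indispensable (the greedy exchange argument would fail otherwise) — a point you also make explicitly. Your verification of the four named methods from their closed forms matches Appendix~\ref{sup:closed_form}, including the observation that RISE is an affine, order-preserving transform of $c_i$. One caveat shared by both your argument and the paper's: the stated hypothesis $c_i \geq c_j \Rightarrow \cam(\vu)_i \geq \cam(\vu)_j$ is only a weak order-preservation condition (a constant $\cam$ vacuously satisfies it), so strictly speaking the ``general claim'' requires either tie-breaking consistent with $c$ or a strict version of the implication; since the four named methods are strictly monotone in $c_i$ this does not affect the headline result.
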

\begin{theorem}[Optimal C-$\mu$Fidelity in the penultimate layer]
When decomposing in the penultimate layer,~\textbf{Gradient Input}, \textbf{Integrated Gradients}, \textbf{Occlusion}, and \textbf{Rise} yield the optimal solution for the C-$\mu$Fidelity metric.
\end{theorem}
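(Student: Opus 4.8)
The plan is to exploit the fact that, in the penultimate layer, $\fb$ restricted to the concept representation is affine, compute a closed form for each of the four CATs, and then observe that every one of them is a positive affine rescaling of the same ``ground-truth'' per-concept output contribution, with a slope that does not depend on the concept index. Since Pearson correlation is invariant under positive affine reparametrisations of either argument, this forces the C-$\mu$Fidelity of these methods to equal its maximal possible value $1$, which is by definition optimal.

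First I would fix notation. Writing $\bm{w}' := \V^\tr\W \in \Reals^k$, the hypothesis $y = \bm{a}\W + \bias$ gives $\fb(\vu\V^\tr) = \langle \vu, \bm{w}' \rangle + \bias$. Consequently, for any subset $S \subseteq \{1,\dots,k\}$ of concepts with the associated removal mask $\mathbf{m}_S$ (coordinates in $S$ set to $0$), the output change that enters the C-$\mu$Fidelity definition is exactly $\Delta_{\vu}(S) := \fb(\vu\V^\tr) - \fb((\vu \odot \mathbf{m}_S)\V^\tr) = \sum_{i \in S} \vu_i w'_i$. This linearity in $\vu$ is what makes everything collapse.

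Next I would record the closed forms of the four CATs in this regime (the content of Appendix~\ref{sup:closed_form}), using the zero-baseline / zero-removal convention that is consistent with C-Deletion and C-Insertion. For \textbf{Gradient}$\odot$\textbf{Input}, $\cam(\vu)_i = \vu_i \nabla_{\vu_i}\fb(\vu\V^\tr) = \vu_i w'_i$; for \textbf{Integrated Gradients} with baseline $\bm{0}$ the integrand gradient is the constant $w'_i$, so again $\cam(\vu)_i = \vu_i w'_i$; for \textbf{Occlusion}, removing concept $i$ changes the output by $\cam(\vu)_i = \vu_i w'_i$; and for \textbf{RISE} with i.i.d.\ Bernoulli$(p)$ masks, expanding $\mathbb{E}_{\mathbf{m}}[\mathbf{m}_i\,\fb((\vu\odot\mathbf{m})\V^\tr)]/\mathbb{E}[\mathbf{m}_i]$ and using $\mathbb{E}[\mathbf{m}_i\mathbf{m}_j] = p^2$ for $i \neq j$ gives $\cam(\vu)_i = (1-p)\,\vu_i w'_i + \bigl(p\langle\vu,\bm{w}'\rangle + \bias\bigr)$. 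In every case $\cam(\vu)_i = c\,\vu_i w'_i + b$ for a constant $c > 0$ (equal to $1$, or $1-p$ for RISE) and a constant $b$ that does not depend on $i$. Hence, for any $S$ drawn from the fixed-cardinality subset distribution used by $\mu$Fidelity~\cite{aggregating2020}, $\sum_{i \in S}\cam(\vu)_i = c\,\Delta_{\vu}(S) + b|S|$ is a positive affine function of $\Delta_{\vu}(S)$ whose coefficients do not depend on $S$. Therefore the correlation between $\sum_{i\in S}\cam(\vu)_i$ and $\Delta_{\vu}(S)$ is exactly $1$ for every $\vu$, so after averaging over the dataset the C-$\mu$Fidelity equals $1$; as this is the largest value a correlation can take, these four methods are optimal.

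The hard part will not be the algebra but the bookkeeping around the word ``optimality''. One must pin down the exact subset-sampling law used in C-$\mu$Fidelity so that the additive term $b|S|$ is genuinely constant: if subset sizes were allowed to vary, this offset would matter, and both RISE (whose $b$ is nonzero) and Integrated Gradients with a nonzero baseline (whose offset $-\vu'_i w'_i$ would then also depend on $i$) would need a separate treatment. One should also flag the degenerate case in which all the $\vu_i w'_i$ coincide, making $\Delta_{\vu}(S)$ constant and the correlation conventionally undefined; this degenerate situation aside, the ceiling value $1$ is genuinely attained. Finally, it is worth remarking that the same linearisation explains why the statement is special to the penultimate layer: for an earlier layer $\fb$ is nonlinear, $\Delta_{\vu}(S)$ is no longer additive in $S$, and the four CATs no longer reduce to affine rescalings of a common quantity.
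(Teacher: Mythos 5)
Your proof is correct and takes essentially the same route as the paper's: compute the closed forms of the four CATs in the linear penultimate-layer setting, observe that each is a positive affine function $c\,\vu_i \V_i^\tr \W + b$ with slope $c>0$ and offset $b$ independent of $i$, and conclude from the affine invariance of Pearson correlation (together with the fixed subset size $|S|=m$) that C-$\mu$Fidelity attains its ceiling of $1$. Your extra caveats --- that the offset term only cancels because $|S|$ is held constant, that a nonzero IG baseline or a degenerate $\vu_i\V_i^\tr\W$ would need separate handling --- are sound observations that the paper leaves implicit, and your Bernoulli$(p)$ treatment of RISE is a harmless generalization of the paper's uniform-mask computation.
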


\begin{wrapfigure}{L}{0.42\textwidth}
    \center
    \includegraphics[width=0.42\textwidth]{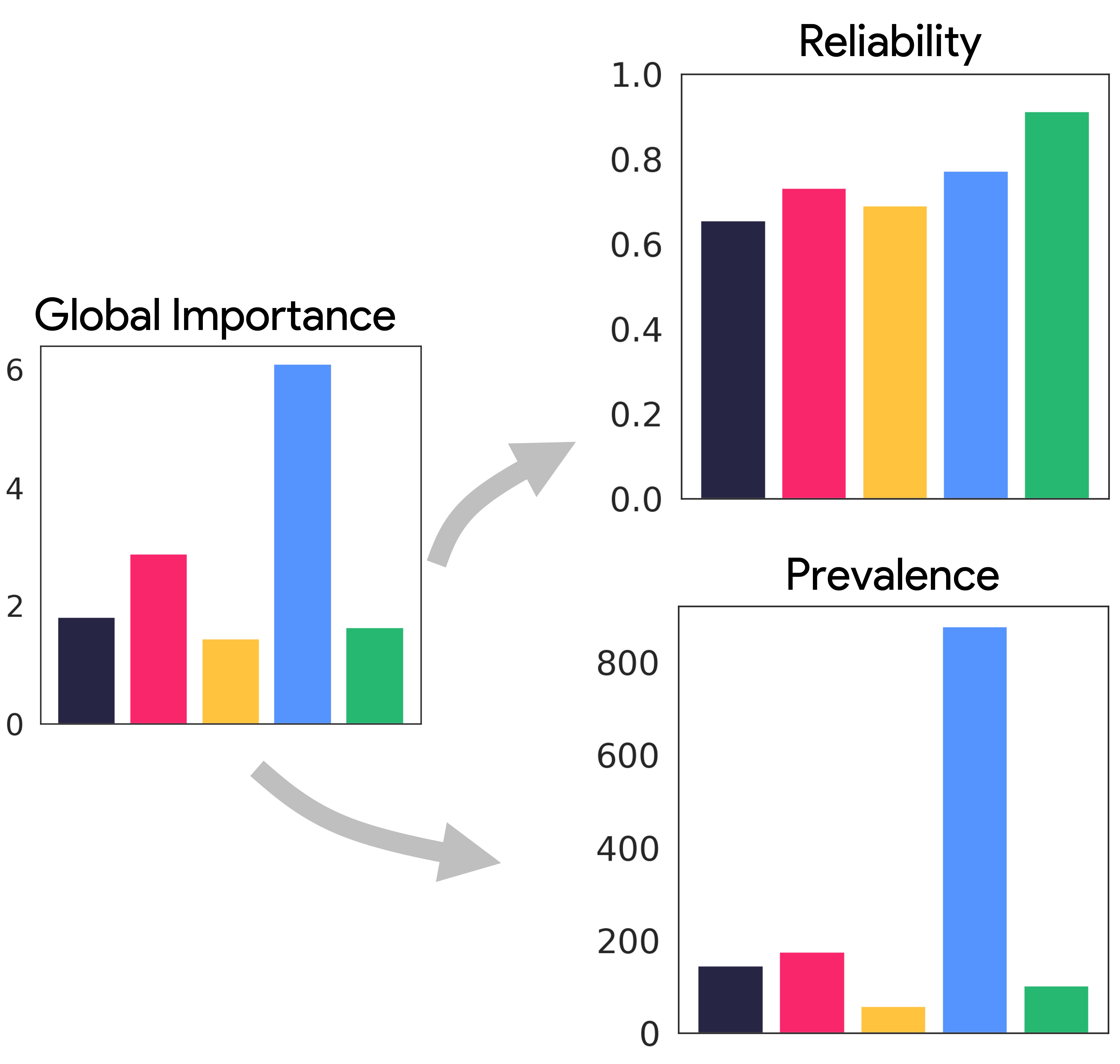}
    \caption{\textbf{From global (class-based) to local (image-based) importance.} Global importance can be decomposed into \textit{reliability} and \textit{prevalence} scores. Prevalence quantifies how frequently a concept is encountered, and reliability indicates how diagnostic a concept is for the class. The bar-charts are computed for the class ``Espresso'' on a ResNet50 (see Figure~\ref{fig:clustering_graph}, left panel)
    }
    \vspace{-5mm}
    \label{fig:barchart}
\end{wrapfigure}

Therefore, for all $3$ metrics, the concept importance methods based on Gradient Input, Integrated Gradient, Occlusion, and Rise are optimal, when used in the penultimate layer.

In summary, our investigation of concept extraction methods from the perspective of dictionary learning demonstrates that the NMF approach, specifically when extracting concepts from the penultimate layer, presents the most appealing trade-off compared to PCA and K-Means methods. In addition, our formalization of concept importance using attribution methods provided us with a theoretical guarantee for $4$ different CATs. Henceforth, we will then consider the following setup: a NMF on the penultimate layer to extract the concepts, combined with a concept importance method based on Integrated Gradient.

\vspace{-2mm}
\subsection{Unveiling main strategies}
\vspace{-2mm}
So far, the concept-based explainability methods have mainly focused on evaluating the global importance of concepts, i.e., the importance of concepts for an entire class~\cite{kim2018interpretability,fel2023craft}. This point can be limiting when studying misclassified data points, as we can speculate that the most important concepts for a given class might not hold for an individual sample (local importance). Fortunately, our formulation of concept importance using attribution methods gives us access to importance scores at the level of individual samples (\textit{i.e.,} $\cam(\vu)$). Here, we show how to use these local importance scores to efficiently cluster data points based on the strategy used for their classification.

 The local (or image-based) importance of concepts can be integrated into global measures of importance for the entire class with the notion of \textit{prevalence} and \textit{reliability} (see Figure~\ref{fig:barchart}). A concept is said to be prevalent at the class level when it appears very frequently. A \textit{prevalence} score is computed based on the number of times a concept is identified as the most important one, i.e., $\argmax \cam(\vu)$. At the same time, a concept is said to be reliable if it is very likely to trigger a correct prediction. The \textit{reliability} is quantified using the mean classification accuracy on samples sharing the same most important concept.

\vspace{-3mm}
\paragraph{Strategic cluster graph.} In the strategic cluster graph (Figure~\ref{fig:clustering_graph} and Figure~\ref{fig:lemon}), we combine the notions of concept \textit{prevalence} and \textit{reliability} to reveal the main strategies of a model for a given category, more precisely, we reveal their repartition across the different samples of the class.
We use a dimensionality reduction technique (UMAP~\cite{mcinnes2018umap}) to arrange the data points based on the concept importance vector $\cam(\vu)$ of each sample. Data points are colored according to the associated concept with the highest importance -- $\argmax \cam(\vu)$. 
Interestingly, one can see in Figure~\ref{fig:clustering_graph} and Figure~\ref{fig:lemon} that spatially close points represent samples classified using \textit{similar strategies} -- as they exhibit similar concept importance -- and not necessarily similar embeddings.
For example, for the ``lemon'' object category (Figure \ref{fig:lemon}), the texture of the lemon peel is the most \textit{prevalent} concept, as it appears to be the dominant concept in $90\%$ of the samples (see the green cluster in Figure~\ref{fig:lemon}). We also observe that the concept ``pile of round, yellow objects'' is not reliable for the network to properly classify a lemon as it results in a mean classification accuracy of $40\%$ only (see top-left graph in Figure~\ref{fig:lemon}).

In Figure~\ref{fig:lemon} (right panel), we have exploited the strategic cluster graph to understand the classification strategies leading to bad classifications. For example, an orange ($1^{st}$ image, $1^{st}$ row) was classified as a lemon because of the peel texture they both share. Similarly, a cathedral roof was classified as a lemon because of the wedge-shaped structure of the structure ($4^{th}$ image, $1^{st}$ row).

\begin{figure}[h!]
\begin{center}
   \includegraphics[width=1\textwidth]{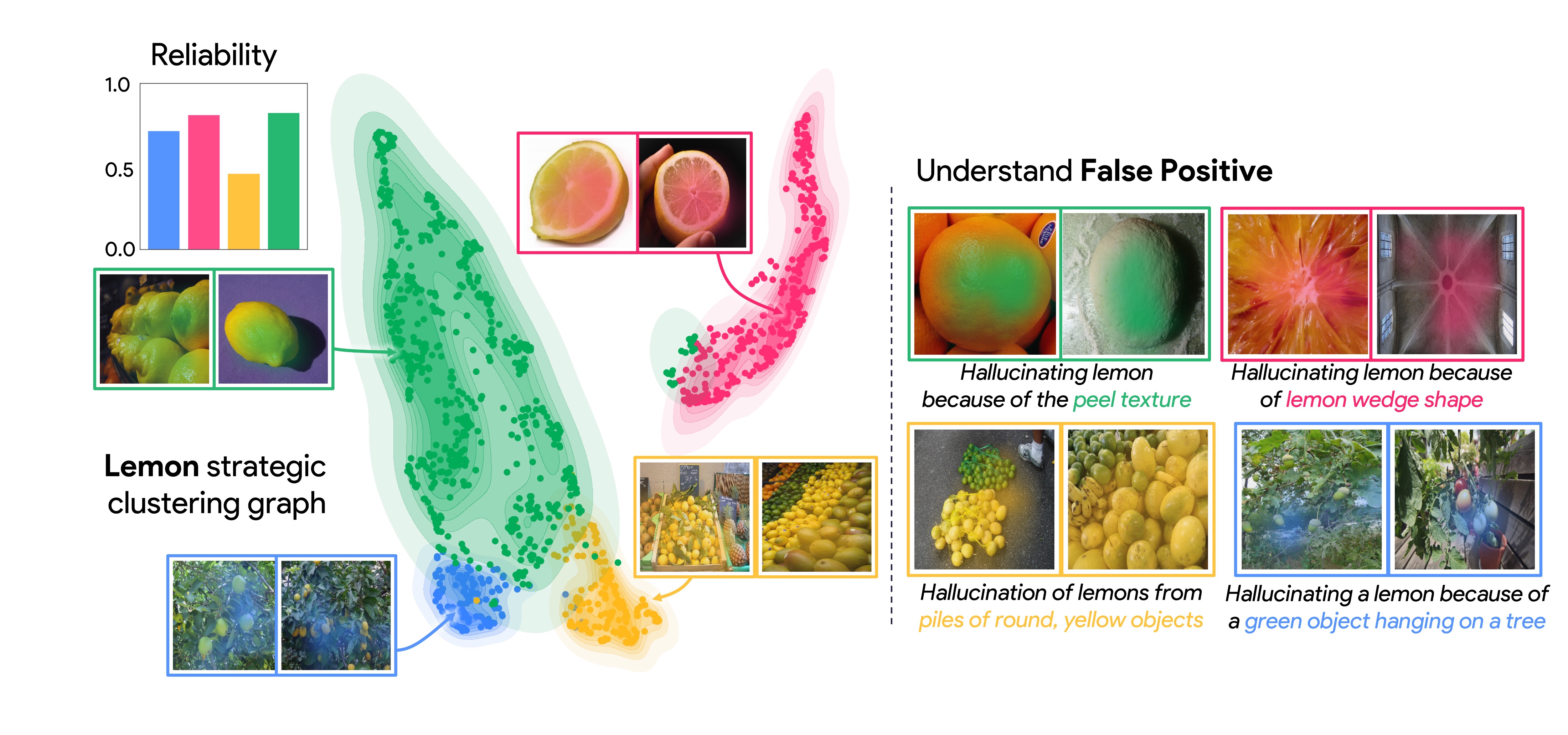}
\end{center}
   \caption{\textbf{Strategic cluster graph for the lemon category.} \textbf{Left}: U-MAP of lemon samples, in the concept space. Each concept is represented with its own color and is exemplified with example belonging to the cluster. The concepts are  \textcolor{red}{$\bullet$} the lemon wedge shape, \textcolor{yellow}{$\bullet$} a pile of round, yellow objects, \textcolor{blue}{$\bullet$} green objects hanging on a tree, and finally \textcolor{green}{$\bullet$} the peel texture, which is the predominant strategy. The reliability of each concept is shown in the top-left bar-chart. \textbf{Right}: 
   Example of images predicted as lemon along with their corresponding explanations. These misclassified images are recognized as lemons through the implementation of strategies that are captured by our proposed strategic cluster graph.
   }
\label{fig:lemon}
\end{figure}

\vspace{-3mm}
\section{Discussion}
\vspace{-3mm}

This article introduced a theoretical framework that unifies all modern concept-based explainability methods. Breaking down and formalizing the two essential steps in these methods, concept extraction and concept importance scoring, allowed us to better understand the underlying principles driving concept-based explainability. We leveraged this unified framework to propose new evaluation metrics for assessing the quality of extracted concepts. Through experimental and theoretical analyses, we justified the standard use of the last layer of an ANN for concept-based explanation. Finally, we harnessed the parallel between concept importance and attribution methods to gain insights into global concept importance (at the class level) by examining local concept importance (for individual samples). We proposed the strategic cluster graph, which provides insights into the strategy used by an ANN to classify images. We have provided an example use of this approach to better understand the failure cases of a system. Overall, our work demonstrates the potential benefits of the dictionary learning framework for automatic concept extraction and we hope this work will pave the way for further advancements in the field of XAI.

\section*{Acknowledgements}

This work was conducted as part of the DEEL project\footnote{https://www.deel.ai/}. Funding was provided by ONR (N00014-19-1-2029), NSF (IIS-1912280 and EAR-1925481), DARPA (D19AC00015), NIH/NINDS (R21 NS 112743), and the ANR-3IA Artificial and Natural Intelligence Toulouse Institute (ANR-19-PI3A-0004). Additional support provided by the Carney
Institute for Brain Science and the Center for Computation and Visualization (CCV). We acknowledge the Cloud TPU hardware resources that Google made available via the TensorFlow Research Cloud (TFRC) program as well as computing hardware supported by NIH Office of the Director grant S10OD025181.

\clearpage

{\small
\bibliographystyle{unsrtnat}
\bibliography{egbib}

\appendix
\clearpage

\section{Attribution methods for Concepts}\label{sup:all_cams}

In the following section, we will re-derive the different attribution methods in the literature. We use the Xplique library and adapted each methods~\cite{fel2022xplique}.
We quickly recall that we seek to estimate the importance of each concept for a set of concept coefficients $\vu = (\vu_1, \ldots, \vu_k) \in \mathbb{R}^k$ in the concept basis $\V \in \mathbb{R}^{p \times k}$. This concept basis is a re-interpretation of a latent space $\sL \subseteq \mathbb{R}^{p}$ and the function $\fb: \mathbb{R}^{p} \to \mathbb{R}$ is a signal used to compute importance from (e.g., logits value, cosine similarity with a sentence...). Each Attributions method will map a set of concept values to an importance score $\cam: \mathbb{R}^k \to \mathbb{R}^k$, a greater score $\cam(\vu)_i$ indicates that a concept $\vu_i$ is more important. 

\textbf{Saliency (SA)}~\cite{simonyan2013deep} was originally a visualization technique based on the gradient of a class score relative to the input, indicating in an infinitesimal neighborhood, which pixels must be modified to most affect the score of the class of interest. In our case, it indicates which concept in an infinitesimal neighborhood has the most influence on the output:

$$ \cam^{(SA)}(\vu) = \nabla_{\vu} \fb(\vu V^\tr) .$$

\textbf{Gradient $\odot$ Input (GI)}~\cite{shrikumar2017learning} is based on the gradient of a class score relative to the input, element-wise with the input, it was introduced to improve the sharpness of the attribution maps. A theoretical analysis conducted by~\cite{ancona2017better} showed that Gradient $\odot$ Input is equivalent to $\epsilon$-LRP and DeepLIFT~\cite{shrikumar2017learning} methods under certain conditions -- using a baseline of zero, and with all biases to zero. In our case, it boils down to:

$$ \cam^{(GI)}(\vu) = \vu \odot \nabla_{\vu} \fb(\vu \V^\tr) .$$

\textbf{Integrated Gradients (IG)}~\cite{sundararajan2017axiomatic} consists of summing the gradient values along the path from a baseline state to the current value. The baseline $\vu_0$ used is zero. This integral can be approximated with a set of $m$ points at regular intervals between the baseline and the point of interest. In order to approximate from a finite number of steps, we use a trapezoidal rule and not a left-Riemann summation, which allows for more accurate results and improved performance (see~\cite{sotoudeh2019computing} for a comparison). For all the experiments $m = 30$.

$$ \cam^{(IG)}(\vu) = (\vu - \vu_0) \int_0^1 \nabla_{\vu} \fb((\vu_0 + \alpha(\vu - \vu_0))\V^\tr) \dif\alpha. $$

\textbf{SmoothGrad (SG)}~\cite{smilkov2017smoothgrad} is also a gradient-based explanation method, which, as the name suggests, averages the gradient at several points corresponding to small perturbations (drawn i.i.d from an isotropic normal distribution of standard deviation $\sigma$) around the point of interest. The smoothing effect induced by the average helps to reduce the visual noise, and hence improves the explanations. In our case, the attribution is obtained after averaging $m$ points with noise added to the concept coefficients. For all the experiments, we took $m = 30$ and $\sigma = 0.1$.

$$ \cam^{(SG)}(\vu) = \underset{\bm{\delta} \sim \mathcal{N}(0, \mathbf{I}\sigma)}{\mathbb{E}}(\nabla_{\vu} \fb( \vu + \bm{\delta}) ).
$$

\textbf{VarGrad (VG)}~\cite{hooker2018benchmark} was proposed as an alternative to SmoothGrad as it employs the same methodology to construct the attribution maps: using a set of $m$ noisy inputs, it aggregates the gradients using the variance rather than the mean. For the experiment, $m$ and $\sigma$ are the same as SmoothGrad. Formally:

$$ \cam^{(VG)}(\vu) = \underset{\bm{\delta} \sim \mathcal{N}(0, \mathbf{I}\sigma)}{\mathbb{V}}(\nabla_{\vu} \fb( \vu + \bm{\delta}) ).
$$

\textbf{Occlusion (OC)}~\cite{zeiler2013visualizing} is a simple -- yet effective -- sensitivity method that sweeps a patch that occludes pixels over the images using a baseline state and use the variations of the model prediction to deduce critical areas. In our case, we simply omit each concept one-at-a-time to deduce the concept's importance. For all the experiments, the baseline state $\vu_0$ was zero.

$$ \cam^{(OC)}(\vu)_i = \fb(\vu \V^\tr) - \fb(\vu_{[i = \vu_0]} \V^\tr)  $$

\textbf{Sobol Attribution Method (SM)}~\cite{fel2021sobol} then used for estimating concept importance in \cite{fel2023craft} is a black-box attribution method grounded in Sensitivity Analysis. Beyond modeling the individual contributions of image regions, Sobol indices provide an efficient way to capture higher-order interactions between image regions and their contributions to a neural network’s prediction through the lens of variance. In our case, the score for a concept $\vu_i$ is the expected variance that would be left if all variables but $i$ were to be fixed : 

$$ \cam^{(SM)}(\vu)_i = \frac{ \mathbb{E}( \mathbb{V}( \fb( (\vu \odot \mathbf{M} ) \V^\tr ) | \mathbf{M}_{\sim i} ) ) }{ \mathbb{V}( \fb( (\vu \odot \mathbf{M} ) \V^\tr)) } . $$

With $\mathbf{M} \sim \mathcal{U}([0, 1])^k$. For all the experiments, the number of designs was $32$ and we use the Jansen estimator of the Xplique library.

\textbf{HSIC Attribution Method (HS)}~\cite{novello2022making} seeks to explain a neural network's prediction for a given input image by assessing the dependence between the output and patches of the input. In our case, we randomly mask/remove concepts and measure the dependence between the output and the presence of each concept through $N$ binary masks. Formally:

$$ \cam^{(HS)}(\vu) = \frac{1}{(N-1)^2} \mathrm{Tr}(KHLH). $$

With $H, L, K \in \mathbb{R}^{N \times N}$ and $K_{ij} = k(\mathbf{M}_i, \mathbf{M}_j)$, $L_{ij} = l(\y_i, \y_j)$ and $H_{ij} = \delta(i=j)-N^{-1}$. Here, $k(\cdot, \cdot)$ and $l(\cdot, \cdot)$ denote the chosen kernels and $\mathbf{M} \sim \{0, 1\}^p$ the binary mask applied to the input $\vu$.

\textbf{RISE (RI)}~\cite{petsiuk2018rise} is also a black-box attribution method that probes the model with multiple version of a masked input to model the most important features. Formally, with $\bm{m} \sim \mathcal{U}([0, 1])^k$. : 

$$ \cam^{(RI)}_i(\vu) =  
\mathbb{E}(\fb( \vu \odot \bm{m} ) | \bm{m}_i = 1).
$$

\section{Closed-form of Attributions for the last layer}\label{sup:closed_form}

Without loss of generality, we focus on the decomposition in the last layer, that is $\activ = \vu\V^\tr$ with parameters $(\W, \bias)$ for the weight and the bias respectively, hence we obtain $\y = (\vu \V^\tr)\W + \bias$ with $\W \in \mathbb{R}^{p}$ and $\bias \in \mathbb{R}$.

We start by deriving the closed form of Saliency (SA) and naturally Gradient-Input (GI):

\begin{flalign*}
\cam^{(SA)}(\vu) 
&= \nabla_{\vu} \fb(\vu \V^\tr)
= \nabla_{\vu} (\vu \V^\tr \W + \bias) &\\
&= \W^\tr \V&.
\end{flalign*}
\begin{flalign*}
\cam^{(GI)}(\vu) 
&= \nabla_{\vu} \fb(\vu \V^\tr) \odot \vu 
= \nabla_{\vu} (\vu \V^\tr \W + \bias) \odot \vu &\\
&= \W^\tr \V \odot \vu &.
\end{flalign*}

We observe two different forms that will in fact be repeated for the other methods, for example with Integrated-Gradient (IG) which will take the form of Gradient-Input, while SmoothGrad (SG) will take the form of Saliency.

\begin{flalign*}
\cam^{(IG)}(\vu)
 &= (\vu - \vu_0) \odot \int_0^1 \nabla_{\vu} \fb((\vu_0 + \alpha (\vu - \vu_0)) \V^\tr) \dif \alpha &\\
 &= \vu \odot \int_0^1 \nabla_{\vu}((\alpha \vu)) \V^\tr\W + \bias + (\alpha-1)\vu_0\V^\tr\W) \dif \alpha &\\
 &= \vu \odot \int_0^1 \alpha\W^\tr \dif \alpha = \vu \odot \W^\tr \V \left[\frac{1}{2}\alpha^2\right]_0^1\\
 &= \frac{1}{2}\vu \odot \W^\tr \V.
\end{flalign*}

\begin{flalign*}
\cam^{(SG)}(\vu)
&= \underset{\bm{\delta} \sim \mathcal{N}(0, \mathbf{I}\sigma)}{\mathbb{E}}(\nabla_{\vu} \fb( \vu + \bm{\delta}) ) 
= \underset{\bm{\delta} \sim \mathcal{N}(0, \mathbf{I}\sigma)}{\mathbb{E}}(\nabla_{\vu}( (\vu + \bm{\delta}) \V^\tr\W + \bias) ) & \\
& = \underset{\bm{\delta} \sim \mathcal{N}(0, \mathbf{I}\sigma)}{\mathbb{E}}(\nabla_{\vu}(\vu \V^\tr\W)) & \\
& = \W^\tr \V &.
\end{flalign*}

The case of VarGrad is specific, as the gradient of a linear system being constant, its variance is null.

\begin{flalign*}        
\cam^{(VG)}(\vu)
&= \underset{\bm{\delta} \sim \mathcal{N}(0, \mathbf{I}\sigma)}{\mathbb{V}}(\nabla_{\vu} \fb( \vu + \bm{\delta}) )
= \underset{\bm{\delta} \sim \mathcal{N}(0, \mathbf{I}\sigma)}{\mathbb{V}}(\nabla_{\vu} ( (\vu + \bm{\delta}) \V^\tr \W + \bias) ) & \\
&= \underset{\bm{\delta} \sim \mathcal{N}(0, \mathbf{I}\sigma)}{\mathbb{V}}(\W^\tr \V) &\\
&= 0&.
\end{flalign*}

Finally, for Occlusion (OC) and RISE (RI), we fall back on the Gradient Input form (with multiplicative and additive constant for RISE).

\begin{flalign*}
\cam^{(OC)}_i(\vu)
&= \fb(\vu \V^\tr) - \fb(\vu_{[i = \vu_0]} \V^\tr)
= \vu \V^\tr\W + \bias - (\vu_{[i = \vu_0]} \V^\tr\W + \bias) & \\
& = (\sum_{j}^{r} \vu_j \V_j^\tr)\W - (\sum_{j \neq i}^{r} \vu_j \V_j^\tr)\W &\\
& = \vu_i \V_i^\tr \W &
\end{flalign*}
thus $\cam^{(OC)}(\vu) = \vu \odot \W^\tr \V$

\begin{flalign*}
\cam^{(RI)}_i(\vu)
&= \mathbb{E}(\fb( \vu \odot \bm{m} ) | \bm{m}_i = 1)
= \mathbb{E}( (\vu\odot\bm{m}) \V^\tr\W + \bias | \bm{m}_i = 1) & \\
& = \bias + \sum_{j \neq i}^r \vu_j \mathbb{E}(\bm{m}_j) \V_j^\tr\W + \vu_i \V_i^\tr\W &\\
& = \bias + \frac{1}{2} (\vu\V^\tr\W + \vu_i \V_i^\tr\W)&
\end{flalign*}

\section{$\mu$ Fidelity optimality}\label{sup:fidelity_theorem}

Before showing that some methods are optimal with regard to C-Deletion and C-Insertion, we start with a first metric that studies the fidelity of the importance of concepts: $\mu$Fidelity, whose definition we recall

$$
\mu F = \underset{\substack{S \subseteq \{1, \ldots, k\} \\ |S| = m} }{\rho}(
\sum_{i \in S} \cam(\vu)_i,
\fb(\vu) - \fb(\vu_{[\vu_i = \vu_0, i \in S]})
)
$$

With $\rho$ the Pearson correlation and $\vu_{[\vu_i = \vu_0, i \in S]}$ means that all $i$ components of $\vu$ are set to zero.

\begin{theorem}[Optimal $\mu$Fidelity in the last layer]
When decomposing in the last layer,~\textbf{Gradient Input}, \textbf{Integrated Gradients}, \textbf{Occlusion}, and \textbf{Rise} yield the optimal solution for the $\mu$Fidelity metric.
In a more general sense, any method $\cam(\vu)$ that is of the form
$\cam_{i}(\vu) = a (\vu_i\V_i^\tr \W) + b $ with $a \in \mathbb{R}^+, b \in \mathbb{R}$ yield the optimal solution, thus having a correlation of 1.
\end{theorem}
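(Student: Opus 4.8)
The plan is to show that each of the four methods (Gradient Input, Integrated Gradients, Occlusion, RISE) produces, in the last-layer setting, a concept-importance vector of the affine form $\cam_i(\vu) = a(\vu_i \V_i^\tr \W) + b$ for some $a \geq 0$ and $b \in \mathbb{R}$, and then to argue that \emph{any} method of this form attains $\mu F = 1$. The first part is already essentially done by the closed-form computations in Appendix~\ref{sup:closed_form}: there we found $\cam^{(GI)}(\vu) = \vu \odot \W^\tr\V$, $\cam^{(IG)}(\vu) = \tfrac12 \vu \odot \W^\tr\V$, $\cam^{(OC)}(\vu) = \vu \odot \W^\tr\V$, and $\cam^{(RI)}(\vu) = \bias + \tfrac12(\vu\V^\tr\W + \vu_i\V_i^\tr\W)$. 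Reading these coordinate-wise, each has the form $a\,(\vu_i \V_i^\tr\W) + b$ with $a \in \{1, \tfrac12\}$ and $b$ either $0$ or the constant $\bias + \tfrac12 \vu\V^\tr\W$ (which is constant in $i$, hence a legitimate $b$). So the first step is just to invoke Appendix~\ref{sup:closed_form} and observe this common shape.

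The heart of the argument is the second, general claim. Fix $\vu$ and a subset size $m$, and consider the random subset $S$ with $|S| = m$. The $\mu$Fidelity is the Pearson correlation over the draw of $S$ between $g(S) := \sum_{i\in S}\cam(\vu)_i$ and $h(S) := \fb(\vu) - \fb(\vu_{[\vu_i = \vu_0,\, i\in S]})$. The key computation: in the last layer $\fb(\vu\V^\tr) = \vu\V^\tr\W + \bias$, so zeroing the coordinates in $S$ subtracts exactly $\sum_{i\in S}\vu_i\V_i^\tr\W$ from the logit; therefore $h(S) = \sum_{i\in S}\vu_i\V_i^\tr\W$. On the other hand, if $\cam(\vu)_i = a(\vu_i\V_i^\tr\W) + b$ then $g(S) = a\sum_{i\in S}\vu_i\V_i^\tr\W + mb = a\,h(S) + mb$. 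Since $m$ and $b$ are fixed (not random in $S$), $g(S)$ is an exact affine function of $h(S)$ with \emph{positive} slope $a \geq 0$ — and as long as $a > 0$ and $h(S)$ is non-degenerate, the Pearson correlation equals $+1$. I would state this as: $\rho(g(S), h(S)) = \rho(a\,h(S) + mb,\ h(S)) = \operatorname{sign}(a) = 1$.

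The main obstacle — really the only subtlety — is the degenerate edge cases. If $a = 0$ (never happens for these four methods, but is allowed by the statement's phrasing) or if $\mathbb{V}_S(h(S)) = 0$ the correlation is undefined; one should note that generically $\mathbb{V}_S(h(S)) > 0$ (the coefficients $\vu_i\V_i^\tr\W$ are not all equal in practice) and that $a = \tfrac12$ or $1$ strictly, so the conclusion $\mu F = 1$ holds. A second minor point worth a sentence: for RISE the "$b$" is $\bias + \tfrac12\vu\V^\tr\W$, which depends on $\vu$ but not on $i$ or $S$, so it is still a constant for the purpose of the correlation over $S$; I would flag this explicitly so the reader sees why RISE, despite its messier closed form, still fits. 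Everything else is the one-line correlation identity above, so beyond handling the degenerate cases there is no real work to grind through.
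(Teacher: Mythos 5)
Your proposal is correct and follows essentially the same route as the paper: reduce $\fb(\vu)-\fb(\vu_{[\vu_i=\vu_0,\,i\in S]})$ to $\sum_{i\in S}\vu_i\V_i^\tr\W$, observe that each closed-form attribution from Appendix~\ref{sup:closed_form} gives $\sum_{i\in S}\cam(\vu)_i$ as a positive-slope affine function of that quantity, and invoke affine-invariance of Pearson correlation. The only added value in your write-up is the explicit flag of the degenerate cases ($a=0$ or $\mathbb{V}_S(h(S))=0$, where $\rho$ is undefined), which the paper's proof silently elides.
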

\begin{proof}
In the last layer case, $\mu$Fidelity boils down to:

\begin{flalign*}
\mu F &= \underset{\substack{S \subseteq \{1, \ldots, k\} \\ |S| = m} }{\rho}\big(\sum_{i \in S} \cam(\vu)_i,
\vu \V^\tr \W + \bias - ( \sum_{i \notin S} \vu_i \V_i^\tr \W) - \bias
\big) & \\
&= \underset{\substack{S \subseteq \{1, \ldots, k\} \\ |S| = m} }{\rho}\big(\sum_{i \in S} \cam(\vu)_i,
\sum_{i \in S} \vu_i \V_i^\tr \W
\big) &
\end{flalign*}

We recall that for \textbf{Gradient Input}, \textbf{Integrated Gradients}, \textbf{Occlusion}, $\cam_i(\vu) \propto \vu_i \V_i^\tr \W$, thus 
\begin{flalign*}
\mu F &= \underset{\substack{S \subseteq \{1, \ldots, k\} \\ |S| = m} }{\rho}\big(
\sum_{i \in S} \vu_i \V_i^\tr \W,
\sum_{i \in S} \vu_i \V_i^\tr \W
\big) = 1 &
\end{flalign*}
For \textbf{RISE}, we get the following characterization:
\begin{flalign*}
\mu F &= \underset{\substack{S \subseteq \{1, \ldots, k\} \\ |S| = m} }{\rho}\big(
\sum_{i \in S} \bias + \frac{1}{2} (\vu\V^\tr\W + \vu_i \V_i^\tr\W)
,
\sum_{i \in S} \vu_i \V_i^\tr \W
\big) & \\
&= \underset{\substack{S \subseteq \{1, \ldots, k\} \\ |S| = m} }{\rho}\big(
|S|(\bias + \frac{1}{2} (\vu\V^\tr\W)) + 
\sum_{i \in S} \frac{1}{2} \vu_i \V_i^\tr\W
,
\sum_{i \in S} \vu_i \V_i^\tr \W
\big) & \\
&= \underset{\substack{S \subseteq \{1, \ldots, k\} \\ |S| = m} }{\rho}\big(
a(  
\sum_{i \in S} \vu_i \V_i^\tr\W) + b
,
\sum_{i \in S} \vu_i \V_i^\tr \W
\big)  = 1 & \\
\end{flalign*}

with $a = \frac{1}{2}, b = m(\bias + \frac{1}{2} (\vu\V^\tr\W))$. 

\end{proof}

\section{Optimality for C-Insertion and C-Deletion}\label{sup:matroid}

In order to prove the optimality of some attribution methods on the C-Insertion and C-Deletion metrics, we will use the Matroid theory of which we recall some fundamentals.

Matroids were introduced by Whitney in 1935~\cite{whitney1992abstract}. 
It was quickly realized that they unified properties of various domains such as graph theory, linear algebra or geometry. 
Later, in the '60s, a connection was made with combinatorial optimization, nothing that they also played a central role in combinatorial optimization. 

The power of this tool is that it allows us to show easily that greedy algorithms are optimal with respect to some criterion on a broad range of problems. Here, we show that insertion is a greedy algorithm (since the concepts inserted are chosen sequentially based on the model score).

For the rest of this section, we assume $E = \{ e_1, \ldots, e_k \}$ the set of the canonical vectors in $\mathbb{R}^k$, with $e_i$ being the element associated with the $i^{th}$ concept.

\begin{definition}[Matroid] A matroid $M$ is a tuple $(E, \mathcal{J})$, where E is a finite ground set and $\mathcal{J} \subseteq 2^E$ is the power set of $E$, a collection of independent sets, such that:

\begin{enumerate}
  \item $\mathcal{J}$ is nonempty, $\emptyset \in \mathcal{J}$.
  \item $\mathcal{J}$ is downward closed; i.e., if $S \in \mathcal{J}$ and $S' \subseteq S$, then $S' \in \mathcal{J}$ 
  \item If $S, S' \in \mathcal{J}^2$ and $|S| < |S'|$, then $\exists s \in S' \setminus S$ such that $S \cup \{s\} \in \mathcal{J}$
\end{enumerate}

\end{definition}

In particular, we will need uniform matroids: 

\begin{definition}[Uniform Matroid] 
\label{def:matroid}
Let $E$ be a set of size $k$ and let $n \in \{1, \ldots, k \}$. If $\mathcal{J}$ is the collection of all subsets of $E$ of size at most $n$, then $(E, \mathcal{J})$ is a matroid, called a uniform matroid and denoted $M^{(n)}$.
\end{definition}

Finally, we need to characterize the concept set chosen at each step.

\begin{definition}[Base of Matroid] 
Let $M = (E, \mathcal{J})$ be a matroid. A subset $B$ of $E$ is called a basis of $M$ if and only if:
\begin{enumerate}
  \item $B \in \mathcal{J}$
  \item $\forall e \in E \setminus B, ~ B \cup \{e\} \notin \mathcal{J}$
\end{enumerate}
Moreover, we denote $\mathcal{B}(M)$ the set of all the basis of $M$.
\end{definition}

At each step, the insertion metric selects the concepts of maximum score given a cardinality constraint. At each new step, the concepts from the previous step are selected and it add a new concept from the whole available set, the one not selected so far with the highest score.  
This criterion requires an additional ingredient: the \emph{weight} associated to each element of the matroid - here an element of the matroid is a concept.

\paragraph{Ponderated Matroid}

Let $M^{(n)} = (E, \mathcal{J})$ be a uniform matroid and $w : E \to \mathbb{R}$ a weighting function associated to an element of $E$ (a concept).
The goal of C-Insertion at step $n$ is to find a basis (a set of concepts) $B^\star$ subject to $|B| = n$, that maximizes the weighting function : 

$$
\forall B \in \mathcal{J}, ~~ \sum_{e \in B^\star} w(e) \geq \sum_{e \in B} w(e).
$$

Such a basis is called the basis of maximum weights (MW) of the weighted matroid $M^{(n)}$. We will see that the greedy algorithm associated with this weighting function gives the optimal solution to the MW problem on C-Insertion. First, let's define the \emph{Greedy algorithm}.

\begin{algorithm}
\caption{Greedy algorithm}\label{alg:greedy_matroide}
\begin{algorithmic}
\Require A $n$-uniform weighted matroid $M^{(n)} = (E, \mathcal{J}, w)$
\State Sort the concepts by their weight $w(e_i)$ in non-increasing order, and store them in a list $\bar{e}$ such that~${\forall (i, j) \subseteq \{1, \ldots, k\}^2, w(\bar{e}_i) \geq w(\bar{e}_j) ~ \text{if} ~ i < j}$.

\State $B^{\star} = \{\}$
\For{$k = 1$ to $n$}   
    \State $B^{\star} = B^{\star} \cup \bar{e}_i$
\EndFor  \\
\Return $B^{\star}$

\end{algorithmic}
\end{algorithm}

\begin{theorem}[Greedy Algorithm is an optimal solution to MW.] Let $M = (E, \mathcal{J}, w)$ a weighted matroid. The greedy Algorithm~\ref{alg:greedy_matroide} returns a maximum basis of $M$.
\end{theorem}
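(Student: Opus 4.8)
The plan is to prove the classical statement that the greedy algorithm finds a maximum-weight basis of a weighted matroid $M = (E, \mathcal{J}, w)$, which in our setting certifies that C-Insertion (and dually C-Deletion) driven by any CAT that ranks concepts in the same order as the true marginal contributions $\vu_i\V_i^\tr\W$ is optimal. First I would recall that all bases of a matroid have the same cardinality (an immediate consequence of the exchange axiom 3 applied to two bases of different sizes), so that maximizing $\sum_{e\in B}w(e)$ over bases is well-posed and, for the uniform matroid $M^{(n)}$, the bases are exactly the $n$-element subsets of $E$. Thus the MW problem is literally: pick the $n$ concepts of largest weight, which is what Algorithm~\ref{alg:greedy_matroide} does after sorting; the content of the theorem is that this greedy choice is globally optimal, and for general matroids this is non-trivial.

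The core of the argument is an exchange/swap lemma. Let $B^\star = \{\bar e_{1}, \ldots, \bar e_{n}\}$ be the greedy output (indices in non-increasing weight order) and let $B$ be any other basis, which I also sort in non-increasing weight order as $B = \{b_1, \ldots, b_n\}$. The key step I would carry out is to show $w(\bar e_j) \geq w(b_j)$ for every $j \in \{1, \ldots, n\}$. Suppose for contradiction this fails at some index $j$; take the smallest such $j$, so $w(b_j) > w(\bar e_j)$. Consider the independent sets $S = \{\bar e_1, \ldots, \bar e_{j-1}\}$ and $S' = \{b_1, \ldots, b_j\}$, with $|S| < |S'|$. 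By axiom 3 there is some $b_\ell \in S' \setminus S$ with $S \cup \{b_\ell\} \in \mathcal{J}$; since $\ell \leq j$ we have $w(b_\ell) \geq w(b_j) > w(\bar e_j)$. But then at step $j$ the greedy algorithm, which scans concepts in decreasing weight order and adds the first one keeping the set independent, would have been forced to pick an element of weight at least $w(b_\ell) > w(\bar e_j)$ rather than $\bar e_j$ — contradicting the definition of $\bar e_j$. (One has to check $b_\ell$ was still available and independent-extending at that stage, which is exactly what $S\cup\{b_\ell\}\in\mathcal J$ plus downward closure gives.) Summing $w(\bar e_j) \geq w(b_j)$ over $j=1,\ldots,n$ yields $\sum_{e\in B^\star} w(e) \geq \sum_{e\in B} w(e)$, so $B^\star$ is a maximum-weight basis.

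To close the loop with Theorem~\ref{sup:matroid}'s statement about attribution methods, I would then note the modelling step: for C-Insertion at budget $n$ on the penultimate layer, the model-score gain from inserting a set $S$ of concepts is $\sum_{i\in S}\vu_i\V_i^\tr\W$ (from the closed-form computations in Appendix~\ref{sup:closed_form}), so the weight function is $w(\e_i) = \vu_i\V_i^\tr\W$, and C-Insertion is precisely the MW problem on the uniform matroid $M^{(n)}$. The greedy procedure that inserts concepts in the order dictated by a CAT $\cam$ coincides with Algorithm~\ref{alg:greedy_matroide} whenever $\cam$ orders concepts consistently with $w$, i.e. $(\vu\odot\e_i)\V^\tr\W \geq (\vu\odot\e_j)\V^\tr\W \implies \cam(\vu)_i \geq \cam(\vu)_j$; Gradient$\odot$Input, Integrated Gradients, Occlusion all satisfy $\cam_i(\vu)\propto \vu_i\V_i^\tr\W$ and RISE satisfies $\cam_i(\vu) = \tfrac12\vu_i\V_i^\tr\W + (\text{const independent of }i)$, all of which preserve the ordering. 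C-Deletion is the symmetric statement (remove concepts greedily, which is the MW problem on the dual ordering / equivalently choosing which $n$ to keep), handled by the same matroid argument. The main obstacle is the exchange-lemma step above — getting the contradiction crisp requires carefully tracking that the element supplied by the matroid exchange axiom is genuinely one the greedy scan would have reached before $\bar e_j$; the rest (cardinality of bases, the reduction from attribution methods to weights, and the dual C-Deletion case) is routine once the closed forms of Appendix~\ref{sup:closed_form} are in hand.
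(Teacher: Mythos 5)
Your argument is correct and is essentially the same exchange-axiom proof the paper gives: sort both the greedy basis and a competing basis by weight, locate the first index where the competitor would have to win, and invoke axiom 3 to produce an available element whose weight beats the greedy's $j$-th pick, contradicting the greedy selection rule. The only organizational difference is that you prove the stronger componentwise inequality $w(\bar e_j)\ge w(b_j)$ for every basis $B$ and then sum, whereas the paper argues directly by contradiction with a putative better basis $B'$; your version is the cleaner (and standard) presentation, and it also gets the inequality directions right where the paper's write-up flips them (with the paper's convention that smaller index means larger weight, one should have $w(e_{j_k}) \ge w(e_{j_\ell}) > w(e_{i_\ell})$, not $\le/\!<$).
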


\begin{proof}
First, by definition, $B^\star$ is a basis and thus an independent set, i.e., $B^\star \in \mathcal{B}(M)$ (as $\forall (e,e') \in E^2, ~ \langle e,e' \rangle = 0$).
Now, suppose by contradiction that there exists a base $B'$ with a weight strictly greater than $B^\star$. We will obtain a contradiction with respect to the augmentation axiom of the matroid definition.
Let $e_1, \ldots, e_k$ be the elements of $M$ sorted such that $w(e_i) > w(e_j)$ whenever $i < j$. 
Let $n$ be the rank of our weighted uniform matroid $M^{(n)}$. 
Then we can write $B^\star = (e_{i_1}, \ldots, e_{i_n})$ and $B' = (e_{j_1}, \ldots, e_{j_n})$ with $j_k < j_l$ and $i_k < i_l$ for any $k < l$.

Let $\ell$ be the smallest positive integer such that $i_\ell$ > $j_\ell$. In particular, $\ell$ exists and is at most $n$ by assumption. Consider the independent set $S_{\ell-1} = \{e_{i_1}, \ldots e_{\ell-1}\}$ (in particular, $S_{\ell-1} = \emptyset$ if $\ell =1$). According to the augmentation axiom (Definition \ref{def:matroid}, I3), there exist $k \in \{1, \ldots, \ell \}$ such that $S_{\ell-1} + e_{j_k} \in \mathcal{J}$ and $e_{j_k} \notin S_{\ell-1}$. However, $j_k \leq j_\ell < i_\ell$, thus $w(e_{j_k}) \leq w(e_{j_\ell}) <w(e_{i_\ell})$. This contradicts the definition of the greedy algorithm.
\end{proof}

Now, we notice that for the last layer, Insertion is a weighted matroid. We insist that this result is \emph{only true for the concepts in the penultimate layer}, as our demonstrations rely on the linearity of the decomposition. Here, the weight is given by the score of the model, which is a linear combination of concepts.

\begin{theorem}[Optimal Insertion in the last layer]
When decomposing in the last layer,~\textbf{Gradient Input}, \textbf{Integrated Gradients}, \textbf{Occlusion}, and \textbf{Rise} yield the optimal solution for the C-Insertion metric.
In a more general sense, any method $\cam(\vu)$ that  satisfies the condition 
$\forall (i, j) \in \{1, \ldots, k\}^2, 
(\vu \odot \e_i) \V^\tr\W \geq (\vu \odot \e_j) \V^\tr \W
\implies 
\cam(\vu)_i \geq \cam(\vu)_j 
$ yield the optimal solution.
\end{theorem}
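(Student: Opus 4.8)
The plan is to reduce the statement to the weighted–matroid machinery already developed in this appendix. Recall that C-Insertion proceeds incrementally: at cardinality step $m$ it reports the model score obtained by retaining only the $m$ concepts ranked highest by $\cam(\vu)$, all other coordinates of the representation being set to zero, and the metric is the area under the resulting curve. Since we decompose in the last layer, $\fb$ is affine, so for any retained set $S \subseteq \{1,\dots,k\}$ one has $\fb\big((\vu \odot \mathbf{1}_S)\V^\tr\big) = \bias + \sum_{i\in S}\vu_i\V_i^\tr\W = \bias + \sum_{i\in S} w(\e_i)$, where I set $w(\e_i) := (\vu\odot\e_i)\V^\tr\W = \vu_i\V_i^\tr\W$. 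Hence the C-Insertion AUC equals, up to the additive constant $\bias$ times the number of steps, $\sum_{m} \sum_{i\in S_m} w(\e_i)$, and maximizing it amounts to choosing, at each $m$, a size-$m$ set $S_m$ of maximal total weight, with the family $(S_m)_m$ forced to be nested because insertion is incremental.

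Next I would invoke the results of this appendix: with $E = \{\e_1,\dots,\e_k\}$, the pair $M^{(m)} = (E,\mathcal{J})$ is a uniform matroid and $w$ the weight function above, so by the theorem that the greedy algorithm is optimal for MW, sorting the $\e_i$ by $w(\e_i)$ in non-increasing order and taking prefixes returns a maximum-weight basis \emph{simultaneously for every rank $m$}. Because one sorted order is weight-optimal at every cardinality at once, the nested chain $S_1\subset S_2\subset\cdots$ it produces is optimal for the whole AUC, not merely a single step. It therefore suffices to show that ordering the concepts by $\cam(\vu)_i$ in non-increasing order coincides with ordering them by $w(\e_i)$ in non-increasing order; then the $\cam$-driven insertion process is literally Algorithm~\ref{alg:greedy_matroide} run on the weighted matroid, hence optimal. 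This is exactly what the displayed hypothesis guarantees: $w(\e_i)\geq w(\e_j)\implies \cam(\vu)_i\geq\cam(\vu)_j$ says $i\mapsto\cam(\vu)_i$ is an order-preserving relabeling of $i\mapsto w(\e_i)$. Ties require only a short remark: if $w(\e_i)=w(\e_j)$ then swapping $i$ and $j$ in the sorted list changes neither the weight collected at any step nor the metric, so any tie-breaking is optimal — which is why the implication-form hypothesis is at the right level of generality.

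Finally I would discharge the four named methods using the last-layer closed forms of Appendix~\ref{sup:closed_form}: $\cam^{(GI)}(\vu)=\W^\tr\V\odot\vu$ and $\cam^{(OC)}(\vu)=\vu\odot\W^\tr\V$, so in both cases $\cam(\vu)_i=\vu_i\V_i^\tr\W=w(\e_i)$ exactly; $\cam^{(IG)}(\vu)=\frac{1}{2}\,\vu\odot\W^\tr\V=\frac{1}{2}\,w$, a positive scalar multiple; and $\cam^{(RI)}_i(\vu)=\bias+\frac{1}{2}(\vu\V^\tr\W+\vu_i\V_i^\tr\W)=\frac{1}{2}\,w(\e_i)+c$ with $c$ independent of $i$, an increasing affine function of $w(\e_i)$. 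In all four cases $i\mapsto\cam(\vu)_i$ and $i\mapsto w(\e_i)$ induce the same non-increasing order, so the hypothesis holds and optimality follows.

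\noindent\textbf{Anticipated main obstacle.} The delicate point is not any single computation but making the ``nested optimality'' step airtight: one must verify that maximizing the area under the insertion curve is genuinely equivalent to the maximum-weight-basis problem at \emph{every} rank simultaneously. This works precisely because the greedy algorithm on a uniform weighted matroid is rank-uniformly optimal — a single sorted prefix chain is optimal for all $m$ at once — a property special to matroids and to the linearity of $\fb$ in the last layer; away from the last layer $\fb$ is nonlinear, the ``weight of a concept'' is no longer well defined independently of which other concepts are present, and the reduction breaks. The tie-breaking bookkeeping is the only other fiddly point, and it is genuinely minor.
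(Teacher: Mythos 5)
Your proof is correct and follows the same route as the paper's: it reduces C-Insertion to the maximum-weight-basis problem on a uniform weighted matroid, invokes the greedy optimality theorem already proved in the appendix, and verifies the order-preservation hypothesis for the four methods via their last-layer closed forms from Appendix~\ref{sup:closed_form}. You are a bit more explicit than the paper about two points it leaves implicit --- that the greedy prefix chain is weight-optimal at \emph{every} rank simultaneously (so the whole insertion AUC is maximized, not just a single step) and that ties are harmless --- which tightens the exposition but does not change the approach.
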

\begin{proof}
Each $n$ step of the C-Insertion algorithm corresponds to the $n$-uniform weighted matroid with weighting function $w(e_i) = (\vu \odot e_i) \V^\tr\W + b = \vu_i \V^\tr\W + b$. Therefore, any $\cam(\cdot)$ method that produces the same ordering as $w(\cdot)$ will yield the optimal solution. 
It easily follows that \textbf{Gradient Input}, \textbf{Integrated Gradients}, \textbf{Occlusion} are optimal as they all boil down to $\cam_i(\vu) = \vu_i \V^\tr\W+b$.
Concerning RISE, suppose that $w(e_i) \geq w(e_j)$, then $\vu_i \V_i^\tr\W + b \geq \vu_j \V_j^\tr\W + b$, and  
$\cam_i^{(RI)}(\vu) - \cam_j^{(RI)}(\vu)
= \bias + \frac{1}{2} (\vu\V^\tr\W + \vu_i \V_i^\tr\W) - \bias + \frac{1}{2} (\vu\V^\tr\W + \vu_j \V_j^\tr\W)
= \vu_i \V_i^\tr\W - \vu_j \V_j^\tr\W
\geq 0.
$ Thus, RISE importance will order in the same manner and is also optimal.
\end{proof}

\begin{corollary}[Optimal Deletion in the last layer]
When decomposing in the last layer,~\textbf{Gradient Input}, \textbf{Integrated Gradients}, \textbf{Occlusion}, and \textbf{Rise} yield the optimal solution for the C-Deletion metric.
\end{corollary}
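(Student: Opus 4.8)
The plan is to reduce the C-Deletion statement to the C-Insertion result (Theorem on Optimal Insertion in the last layer) that has just been proved, by observing that deleting the most important concepts one at a time is, up to reindexing, the same combinatorial process as inserting the least important concepts. Concretely, in the penultimate-layer setting the model score is linear, $\fb(\vu\V^\tr) = \vu\V^\tr\W + \bias = \sum_{j} \vu_j \V_j^\tr\W + \bias$, so after deleting a set $S \subseteq \{1,\ldots,k\}$ of concepts the residual score is $\sum_{j \notin S}\vu_j\V_j^\tr\W + \bias$. Thus the C-Deletion curve at step $m$ evaluates $\sum_{j\notin S}\vu_j\V_j^\tr\W + \bias$ where $S$ is the set of the $m$ concepts of highest attribution score, and minimizing this residual over $|S|=m$ is exactly equivalent to maximizing $\sum_{j\in S}\vu_j\V_j^\tr\W$ over $|S|=m$, since the total $\sum_j \vu_j\V_j^\tr\W$ is a constant independent of $S$.

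First I would make this complement identity precise: for any $S$ with $|S|=m$, $\sum_{j\notin S}\vu_j\V_j^\tr\W = \big(\sum_j \vu_j\V_j^\tr\W\big) - \sum_{j\in S}\vu_j\V_j^\tr\W$, so the $S$ that minimizes the left side is the $S$ that maximizes $\sum_{j\in S}\vu_j\V_j^\tr\W = \sum_{j\in S} w(e_j)$ with the same weighting function $w(e_j) = \vu_j\V_j^\tr\W$ used in the C-Insertion proof (the additive constant $b$ only contributes a fixed $|S|\cdot b = m b$ term and does not change the argmax). Then I would invoke the uniform-matroid/greedy optimality theorem exactly as in the C-Insertion case: at each step $m$ the optimal deletion set is a maximum-weight basis of the $m$-uniform matroid $M^{(m)}$, and the greedy algorithm — delete concepts in non-increasing order of $w$ — returns it. Finally, since \textbf{Gradient Input}, \textbf{Integrated Gradients}, \textbf{Occlusion}, and \textbf{RISE} all produce an ordering of concepts that agrees with the ordering induced by $w(e_i) = \vu_i\V_i^\tr\W$ (the closed forms give $\cam_i(\vu) = a\,\vu_i\V_i^\tr\W + b$ with $a>0$ for the first three, and the RISE difference computation $\cam_i^{(RI)}(\vu) - \cam_j^{(RI)}(\vu) = \vu_i\V_i^\tr\W - \vu_j\V_j^\tr\W$ from the Optimal Insertion proof shows RISE respects the same order), the greedy sequence these methods induce coincides step-by-step with the maximum-weight greedy, hence yields the optimal C-Deletion curve at every cardinality.

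The one point requiring a little care — and the main potential obstacle — is pinning down what "optimal C-Deletion curve" means as an optimization problem, since the C-Deletion metric is the area under the curve rather than a single point. I would argue that because the greedy method is simultaneously optimal (minimal residual score) at \emph{every} step $m = 0, 1, \ldots, k$, it is also optimal for the area under the curve: any monotone aggregation (in particular a Riemann sum with nonnegative weights) of the per-step residuals is minimized by the sequence that is pointwise minimal at each step, and the greedy ordering produces exactly that sequence. A subtlety is that "lower C-Deletion is better" corresponds to removing high-importance concepts first so the score drops fastest, which is the direction the greedy-by-$w$ ordering gives — this matches the sign conventions in the C-Insertion/C-Deletion definitions. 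Everything else is an immediate transcription of the already-proved C-Insertion argument, so the corollary follows with essentially no new machinery; I would simply remark that the matroid structure and the greedy optimality are inherited verbatim, with the weighting function replaced by its negation (or equivalently, the complement operation exchanging "maximize on $S$" and "minimize on $S^c$").
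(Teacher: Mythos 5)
Your proof is correct and follows essentially the same route as the paper: reduce C-Deletion to the already-established matroid/greedy optimality used for C-Insertion. The paper's proof is a one-line appeal to the same weighted matroid with negated weights $w'(\cdot) = -w(\cdot)$, whereas you make the complement identity $\sum_{j\notin S} w(e_j) = \sum_j w(e_j) - \sum_{j\in S} w(e_j)$ explicit and also spell out why pointwise optimality at every cardinality $m$ (via nested greedy bases of the uniform matroid) implies optimality of the area under the curve --- two details the paper leaves implicit but which are genuinely worth stating.
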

\begin{proof}
It is simply observed that the C-Deletion problem seeks a minimum weight basis and corresponds to the same weighted matroid with weighting function $w'(\cdot) = -w(\cdot)$.
\end{proof}

\section{Sparse Autoencoder}

As a remainder, a general method (as it encompasses both PCA and K-means) to obtain the loading-dictionary pair and achieve a matrix reconstruction $\mathbf{A} = \mathbf{U} \mathbf{V}^\tr$ is to train a neural network to obtain $\mathbf{U}$ from $\mathbf{A}$ such that the reconstruction of $\mathbf{A}$ is linear in $\mathbf{U}$. This can be formally represented as:

$$
(\bm{\psi}^\star, \mathbf{V}^\star) = \arg\min_{\bm{\psi},\mathbf{V}} \| \mathbf{A} - \bm{\psi}(\mathbf{A}) \mathbf{V}^\top \|_F^2
$$

Here, $\mathbf{U}^\star = \bm{\psi}^\star(\mathbf{A}).$ An interesting characteristic of NMF and K-means is the non-linear relationship between $\mathbf{A}$ and $\mathbf{U}$. Specifically, the transformation from $\mathbf{A}$ to $\mathbf{U}$ is non-linear, while the transformation from $\mathbf{U}$ to $\mathbf{A}$ is linear, as explained in \cite{fel2022xplique}, which need to introduce a method based on implicit differentiation to obtain the gradient of $\mathbf{U}$ with respect to $\mathbf{A}$. Indeed, the sequence of operations to optimize $\mathbf{U}$ causes us to lose information about which elements of $\mathbf{A}$ contributed to obtaining $\mathbf{U}$. We believe that this non-linear relationship (absent in PCA) may be an essential ingredient for effective concept extraction.

Finally, as described in this article, other characteristics that appear to make it interpretable include its compositionality (due to non-extreme sparsity), good reconstruction, and positivity, which aids in interpretation. Thus, the architecture of $\bm{\psi}$ used for Figure~\ref{fig:qualitative_comparison} consists of a sequence of dense layers and batch normalization with ReLU activation to obtain positive scores and sparsity similar to NMF, without imposing constraints on $\mathbf{V}$. More formally, $\bm{\psi}$ is a sequence of layers as follows:

$$
\textsc{Dense(128) - BatchNormalization - ReLU}
$$
$$
\textsc{Dense(64) - BatchNormalization - ReLU}
$$
$$
\textsc{Dense(10) - BatchNormalization - ReLU}
$$

While the vector $\V$ is initialized using a truncated SVD~\cite{fathi2023initialization}. We used Adam optimizer\cite{kingma2014adam} with a learning rate of $1e^{-3}$. However, it's worth noting that there is a wealth of literature on dictionary learning that remains to be explored for the task of concept extraction~\cite{dumitrescu2018dictionary}.

\end{document}